\newtheorem{theorem}{Theorem}
\newcommand{\minitab}[2][c]{\begin{tabular}{#1}#2\end{tabular}}
\providecommand{\keywords}[1]{\textbf{\textit{Index terms---}} #1}
\begin{document}
\large
\title{Learning over Inherently Distributed Data}
\author{
Donghui Yan$^{\dag\S}$, Ying Xu$^{\ddag}$
\vspace{0.1in}\\
$^\dag$Department of Mathematics and Program in Data Science\vspace{0.06in}\\
$^{\S}$University of Massachusetts, Dartmouth, MA\vspace{0.1in}\\
$^{\ddag }$Indigo Agroculture Inc, Boston, MA 
\\
}
\date{\today}
\maketitle
\normalsize

\begin{abstract}
\noindent
The recent decades have seen a surge of interests in distributed computing. Existing work focus primarily on either distributed 
computing platforms, data query tools, or, algorithms to divide big data and conquer at 
individual machines etc. It is, however, increasingly often that the data of interest are 
{\it inherently distributed}, i.e., data are stored at multiple distributed sites due to diverse collection channels, business 
operations etc. We propose to enable learning and inference in such a setting via a general 
framework based on the distortion minimizing local transformations. This framework only requires a small 
amount of local signatures to be shared among distributed sites, eliminating the need of having to transmitting big data. 
Computation can be done very efficiently via {\it parallel local computation}. 
The error incurred due to distributed computing vanishes when increasing the size of local signatures. As the shared data need 
not be in their original form, data privacy may also be preserved. Experiments on linear (logistic) regression and Random 
Forests have shown promise of this approach. This framework is expected to apply to a general class of tools in learning and inference with 
the ``continuity" property. 
\end{abstract}

\keywords{
Inherently distributed data, distortion minimizing local transformation, continuity, communication-efficient, parallel local computation, data sharing
}


\section{Introduction}
\label{section:introduction}
The last decades has seen a surge of interests in distributed computing. Some driving forces are the prevalence 
of low-cost clustered computers, storage systems and high speed networking \cite{SortingWorkstation1997,FoxGribble1997}, 
which makes it feasible to interconnect many clustered computers, as well as the pressing need of leveraging big data. 
Numerous systems and platforms have been developed. For example, Google's Bigtable system \cite{GoogleFileSys2003,Bigtable2006}, 
the Apache Hadoop/Map-Reduce \cite{MapReduce2004, HDFS2010}, the Spark system \cite{ZahariaChowdhury2010,ZahariaChowdhury2012}, 
and Amzon's AWS cloud etc. The literature is huge, but mostly on distributed system architecture, computing platforms, or data query tools. 
For an overview of recent advances, please refer to  \cite{ChenMaoLiu2014, Dolev2017ASO}.
\\
\\
Our interest is learning and inference. An influential line 
of work is {\it Bag of Little Bootstrap} \cite{BagLittleBootstrap}. It aims at computing a big data version of Bootstrap \cite{Efron1979}. 
The idea is to take many very ``thin" subsamples, then distribute each subsample 
to a computer node, and finally aggregate inferential results from those individual subsamples. In smoothing Spline setting \cite{Wahba1990}, 
\cite{ChengShang2015} explored the tradeoff between computing efficiency and statistical optimality of the 
{\it Divide-and-Conquer} methods in a distributed environment. Also in the Divide-and-Conquer paradigm are a flurry of work, including 
\cite{ChenXie2014} for penalized regression and model selection consistency when the data is too big to fit in the 
memory of a single machine by working on subsamples of the data and then aggregating the resulting models, \cite{VolgushevChaoCheng2019} which carries out inference by learning a quantile function, \cite{ZhangDuchiWainwright2015} studies 
ridge regression, \cite{BatteyFan2015} considers the general distributed estimation and inference, \cite{LeeLiuSunTaylor2017} learns
Lasso-type linear model at individual sites and then aggregate to de-bias, \cite{RichtarikTakac2016} explores coordinate descent for distributed 
data. Additionally, \cite{MaMahoneyYu2015} 
considered linear least-square regressions on big data using a leverage-based sampling scheme, \cite{RosenblattNadler2016} studied 
the optimality of averaging in distributed computing. One common thread among these is that all assume that the algorithm has access to 
the full data before delegating subtasks to individual machines, and the data are distributed mainly for improving computational efficiency or 
to accommodate the lack of sufficient computer memory.  
\\
\\
With the emergence of big data, it is increasingly often that the data of interest are {\it inherently distributed}. By {\it inherent} 
we mean that the data are stored at a number of distributed sites (machines or nodes) as a result of data collection or business 
operations etc. This is to be contrasted with work mentioned above, for which the data are distributed mainly for computational 
efficiency (i.e., they do not have to be distributed if one has infinite computing power) or memory considerations and as such 
those data on individual machines are typically of identical distribution. To better appreciate why it is necessary to consider 
inherently distributed data, we give here a few examples. 
A big retail vendor, e.g., {\it Walmart}, has sales data generated from {\it walmart.com}, 
or its {\it Walmart} stores, or its warehouse chains---{\it Sam's Club} etc; such data from different sales channels are distributed 
as they are owned by different business groups. In the {\it Walmart} case, many e-commerce applications need more data---the 
more the merrier---for information, such as ``customer who buys A also buys B", in order to build a better recommendation system 
or user personalization model. It is highly desirable to combine data from different sales channels. Another example is in the insurance 
industry, for example the {\it Valen Data Consortium} and the {\it National Insurance Crime Bureau}. 
Data consortiums are formed to combine data from individual vendors to solve the problem of insufficient sample size or to 
have larger data sets for more robust analytics or modeling. 
\\
\\
Similar examples can be easily found in healthcare. Let's take the {\it Prostate Cancer DREAM Challenge} as an example. The 
DREAM Challenge was organized by a consortium of 10 institutes, including Dana Farber Cancer Institute, Prostate Cancer 
Foundation, UC San Francisco etc. For more details, please refer to the DREAM Challenge web.
The data was provided by the following four organizations
\begin{center}
{\it AstraZeneca, Celgene, Sanofi, Memorial Sloan Kettering Cancer Center}.
\end{center}
The goal is to improve the prediction of survival or toxicity of treatment for patients with {\it metastatic castrate 
resistant prostate cancer} (mCRPC) by combining data from all four data providers. Similar as the Walmart example, 
the benefit of combining data is immediate---the resulting data is much larger, and is potentially less biased since each 
data provider may have its own patients base, which 
are typically of different distributions, and by combining data one gets {\it a more faithful representation of the entire 
population}. Often data providers are not willing to directly transmit its patients records due to concerns 
about privacy, or potential business competitions. Data sharing was enabled by a trusted third party, {\it Project Data 
Sphere, LLC} in DREAM Challenge. But that would be slow in deployment, and require a lot of efforts coordinating data 
providers, and can be very costly. Our goal is to provide a fast and easy to use algorithmic solution. 
\\
\\
There are several challenges in finding an algorithmic solution for data sharing. One is that the data over distributed 
sites may be too big, e.g., the {\it Walmart} example, or too sensitive to be directly shared among different sites, e.g., 
the healthcare example, thus it is not desirable to transmit the original data (and in a big 
amount) across distributed sites. Additionally, data at distributed sites may have very {\it different distributions}, due to 
different data collection mechanisms or the subpopulations involved. To appreciate the difficulty of the 
problem, let us ponder for a while and ask the question: {\it Given that each distributed site has its own data, 
but are prevented from directly accessing large amount of data at other sites, how could one perform a global learning or inference 
on all the data?} 
\\
\\
Clearly, ensemble type of algorithms, for example those discussed in the beginning of this section, will not work in 
a straightforward fashion. This is because data at individual sites may be of a different distribution. One may 
pool results from different sites using a weighted scheme, 
but that would require distributional information of individual sites, which is often not easy to estimate. The approach 
we take is a general learning and inference framework where a signature of data is computed at each local machine 
which are pooled together for learning and inference. The computing of the signatures uses only local data, and
can be done simultaneously thus makes good use of the existing hardware infrastructure. 
Work closely related to ours include \cite{JordanLeeYang2018}, which computes 
the global likelihood by iterating over (thus 
transmission of) likelihoods computed at local machines. Though a single transmission may be small in size but 
there are potentially many of those thus this requires a close coordination among individual machines. Also along the similar line 
is \cite{FanPCA2018} which computes principal eigenspaces at individual nodes and then aggregates. Our approach is different
in that it does not require an in-depth ``surgery" of the target method, rather it provides a general framework readily applicable to 
any method of a ``continuity" property through a procedure that is simple, computationally efficient, and requires only knowledge 
of the local data.   
\\
\\
Our contributions are as follows. Motivated by real world applications, we study a fairly new class of problems---learning and 
inference over inherently distributed data. 
In such a setting, our proposed framework enables learning and inference as if {\it directly} on the full data; the potential error
incurred vanishes when increasing the size of local signatures. Our framework requires low data transmission, and, as the 
signatures need not be in their original form, privacy is also preserved. With the bulk of computation carried out {\it in parallel 
at where the data are located} (i.e., parallel local computation), our approach naturally achieves the parallel effect of divide-and-conquer while 
eliminating the need of transmitting big data. 
Our framework is easy to implement, and can be readily applied to any methods with the continuity property.
\\
\\
The remaining of this paper is organized as follows. In Section~\ref{section:framework}, we will describe our framework 
and explain its implementation. This is followed by a discussion in Section~\ref{section:dmlTheory} on some theory that 
supports our learning framework. In Section~\ref{section:vignetteTools}, we discuss a vignette of three learning tools, 
including linear regression, $L_1$ logistic regression \cite{Tibshirani1996, ParkHastie2007, glmnet2010}, and Random 
Forests (RF, \cite{RF}), under our framework. In Section~\ref{section:experiments}, we present experimental results on 
these tools in a distributed setting. Finally we conclude in Section~\ref{section:conclusions}. 
%
\section{A framework for distributed learning and inference}
\label{section:framework}
Underlying our approach is the notion of {\it continuity}---similar data points would yield similar results for learning and 
inference, thus only one or a few among those similar data points are necessary to keep. The notion of continuity can be 
implemented through a class of data transformations called {\it  distortion minimizing local (DML)} transforms 
\cite{YanHuangJordan2009}. The idea is to represent the data by a small set of representative points (or codewords); 
one can think of this as a small-loss data compression or the representative set as a sketch (or signature) of the full data. 
Since the representative set resembles the full data, properties or estimations on the representative set are expected 
to be close to those on the full set. \cite{YanHuangJordan2009} applies DML transforms to spectral clustering \cite{ShiMalik2000,NJW2002}
for fast computation and shows that the error incurred due to DML vanishes when increasing the representative set. 
Similar ideas were explored in \cite{coresets,Chen2011LargeSS}, and have since been applied to 
a number of computation-intensive algorithms, assuming the full data are already in one place and can fit in a single machine. 
In all those previous work, DMLs were mainly introduced to address the computational challenge. 
\\
\\
When the data are inherently distributed, will the notion of continuity and the related DML methodology be still applicable? 
This is a crucial and challenging question. Our theoretical analysis gives an affirmative answer to this, due to the {\it local} 
nature of DML. That is, DML can be done locally, without having to see the full data. Thus DML can be applied to data at 
each distributed site separately. If one can gather local signatures from all individual sites, then approximate learning and 
inference can be easily carried out. Thus, as long as the local data transforms are fine enough, a large class of learning 
algorithms will be able to yield result {\it as good as directly using the full data}. 
\begin{figure}[h]
\centering
\begin{center}
\hspace{0cm}
\includegraphics[scale=0.26,clip]{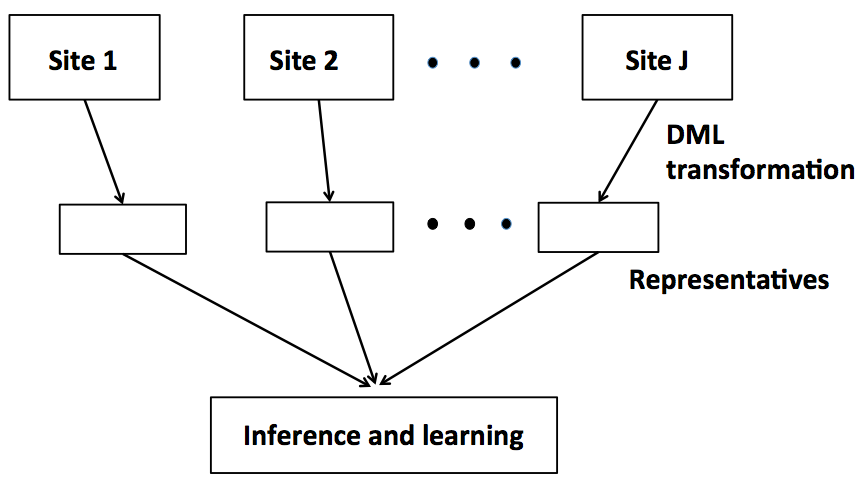}
\end{center}
\caption{\it Learning and inference over inherently distributed data. } 
\label{figure:distInfArch}
\end{figure}
\\
\\
Our framework for learning and inference over inherently distributed data is easy to implement. It consists of three steps:
 \begin{itemize}
 \item[1)] Apply DML to data at each distributed site
 \item[2)] Collect representative points from all distributed sites, and carry out learning and inference on the combined representative set
 \item[3)] Populate the learned model or results to all involved distributed sites.
 \end{itemize}
Figure~\ref{figure:distInfArch} is an illustration of the framework. Clearly algorithms designed under such a 
framework would eliminate the need of having to transmit large amount of data among distributed sites---only those representative 
points need to be transmitted. If the representative set is computed as the average of similar points, then data privacy can be 
preserved since then none of the original data points are ever transmitted. Additionally, learning and inference is only performed on 
the representative set and DML can be done in parallel at individual nodes, the overall computation can 
speed up as well. For the rest of this section, we will describe two implementations of DML proposed in \cite{YanHuangJordan2009}, 
one based on $K$-means clustering \cite{hartiganWong1979, Pollard1981, lloyd1982}, and the other 
based on space partitioning via K-D tree \cite{Bentley1975} or its variants, random projection trees (rpTrees) \cite{DasguptaFreund2008, rpForests2019}. 
\subsection{Implementation of DML transforms}
In general, the DML transforms are required to be computationally efficient while incurring very little loss in information.
It is known that, empirically, both K-means clustering (under a typical implementation such as the Lloyd's algorithm \cite{lloyd1982}) 
and K-D tree related algorithms scale linearly (or with an additional $\log$ factor) with the number of data points, thus are suitable for big data. 
Later in Section~\ref{section:dmlTheory}, we will show both incur little loss in accuracy for pattern classification problems. 
\\
\\
When DML is implemented by $K$-means clustering, each distributed site performs $K$-means clustering separately. Note that, 
here the number of clusters, $K$, may be different for different sites; the only requirement is that $K$ is 
reasonably large but meanwhile small compared to the number of data points at each node. The set of representative 
points are taken as the cluster centers, or averaged, by a metric appropriate for the underlying data, over data points within 
the same cluster.    
\begin{figure}[h]
\centering
\begin{center}
\hspace{0cm}
\includegraphics[scale=0.26,clip]{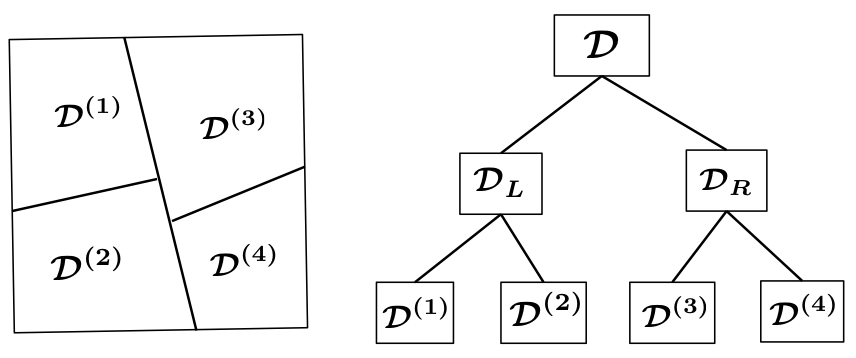}
\end{center}
\caption{\it Illustration of space partition and random projection trees. 
One starts with the root node, $\mathcal{D}$, which corresponds to the full data. After the first split, $\mathcal{D}$ 
is partitioned into its two child nodes, $\{\mathcal{D}_{L}, \mathcal{D}_{R}\}$. The second split partitions $\mathcal{D}_{L}$ 
into $\mathcal{D}_{L}=\mathcal{D}^{(1)} \cup \mathcal{D}^{(2)}$. The third split 
divides $\mathcal{D}_{R}$ into $\mathcal{D}_{R}=\mathcal{D}^{(3)} \cup \mathcal{D}^{(4)}$. 
This process continues until a stopping criterion is met.} 
\label{figure:rpTree}
\end{figure}
\\
To see how recursive space partition and the K-D tree can be used for a DML transform, we first describe the growth of a K-D tree. 
Let the collection of all data at a distributed site form the root node, $\mathcal{D}$, of a tree. Now one variable, say $v$ (index of a 
variable), is selected to split the root node into two child nodes, $\mathcal{D}_L$ and $\mathcal{D}_R$, according to whether a data 
points has its $v$-th coordinate smaller or larger than a cutoff value. For each of $\mathcal{D}_L$ and $\mathcal{D}_R$, a similar 
recursive procedure is followed. That is, pick a variable and then split the node by comparing each data point in the node to a cutoff 
value. This process continues until a stopping criterion is met. In this work, a node stops growing when its size ({\it the number of data 
points it contains}) is less than a predefined value. Figure~\ref{figure:rpTree} illustrates the growth of a K-D tree. 
\\
\\
As the tree growth can be viewed as generating a space partition of the data, data points in the same leaf node would be 
similar and their average can be used as the representative. If the data has a very high dimension, we can use rpTrees \cite{DasguptaFreund2008} 
which adapts to the geometry of the underlying data and readily overcome the {\it curse of dimensionality}. 
In this work, the rpTrees implementation is 
adopted from \cite{rpForests2019}, and for simplicity we split the tree nodes along the median.
\section{A little theory of DML transforms}
\label{section:dmlTheory}
The DML transforms cause distortion to the data, i.e., the data $(X,Y)$ become $\left(T(X),Y\right)$. The idea of 
the DML-based learning framework is that, if the distortion is small then learning and inference over $\left(T(X),Y\right)$ will differ 
very little from that on $(X,Y)$. Such a property is called {\it continuity}. We expect that the continuity would hold 
for many classes of problems in learning and inference. In particular, this can be shown for the pattern classification problem. 
\\
\\
The DML transforms were initially proposed for a framework of fast approximate spectral clustering algorithms \cite{YanHuangJordan2009}. 
It was shown that {\it continuity} holds for DMLs by both quantization (K-means clustering) and rpTrees. Here we extend the idea to pattern 
classification in a distributed setting. Our results hinge on an important 
theorem established by \cite{FaragoGyorfi1975}, which also allows us to precisely characterize the notion 
of {\it distortion minimizing}. 
\begin{theorem}[\cite{FaragoGyorfi1975}] 
\label{thm:faragoGyorfi1975}
Assume that for a sequence of transformations $T_m, m=1, 2, ...$
\begin{equation*}
|| T_m(X) - X || \rightarrow 0
\end{equation*} 
in probability, where $|| \cdot ||$ denotes the Euclidean norm in $\mathbb{R}^d$. Then, if $L^*$ is the Bayes 
error for $(X,Y)$ and $L_{T_m}^*$ is the Bayes error for $(T_m(X), Y)$, 
\begin{equation*}
L_{T_m}^* \rightarrow L^*.
\end{equation*} 
\end{theorem}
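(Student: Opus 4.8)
The plan is to prove the stronger two-sided estimate $|L_{T_m}^* - L^*| \to 0$ by working with the closed-form expression for the Bayes risk rather than with transported decision rules. Suppose $Y$ takes values in $\{1,\dots,M\}$ and write $\eta^{(k)}(x) = P(Y=k\mid X=x)$ for the class posteriors, so that $L^* = \mathbb{E}\big[\,1 - \max_k \eta^{(k)}(X)\big]$; likewise set $\eta_m^{(k)}(z) = P(Y=k\mid T_m(X)=z)$, so $L_{T_m}^* = \mathbb{E}\big[\,1 - \max_k \eta_m^{(k)}(T_m(X))\big]$. The first thing I would record is that, because $\sigma(T_m(X))\subseteq\sigma(X)$, the tower property gives $\eta_m^{(k)}(T_m(X)) = \mathbb{E}\big[\eta^{(k)}(X)\mid T_m(X)\big]$, i.e.\ the transformed posteriors are nothing but the original ones projected onto the coarser $\sigma$-field. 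Since $(t_1,\dots,t_M)\mapsto 1-\max_k t_k$ is $1$-Lipschitz for the sup-norm, this yields at once
\begin{align*}
\big| L_{T_m}^* - L^* \big|
&\;\le\; \mathbb{E}\Big[\max_k \big| \eta_m^{(k)}(T_m(X)) - \eta^{(k)}(X)\big|\Big] \\
&\;\le\; \sum_{k=1}^{M} \mathbb{E}\big| \mathbb{E}[\eta^{(k)}(X)\mid T_m(X)] - \eta^{(k)}(X)\big| ,
\end{align*}
so everything reduces to showing $\mathbb{E}\big|\mathbb{E}[\eta^{(k)}(X)\mid T_m(X)] - \eta^{(k)}(X)\big|\to 0$ for each fixed $k$.

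For that convergence I would argue by approximation. Fix $\varepsilon>0$ and use the density of bounded Lipschitz functions in $L^1(P_X)$ to choose a bounded Lipschitz $\varphi\colon\mathbb{R}^d\to[0,1]$ with $\mathbb{E}\,|\eta^{(k)}(X)-\varphi(X)|<\varepsilon$. Inserting $\varphi$ and using the triangle inequality,
\begin{align*}
\mathbb{E}\big|\mathbb{E}[\eta^{(k)}(X)\mid T_m(X)] - \eta^{(k)}(X)\big|
&\le \mathbb{E}\big|\mathbb{E}[\eta^{(k)}(X)-\varphi(X)\mid T_m(X)]\big| \\
&\quad + \mathbb{E}\big|\mathbb{E}[\varphi(X)\mid T_m(X)] - \varphi(T_m(X))\big| \\
&\quad + \mathbb{E}\big|\varphi(T_m(X)) - \eta^{(k)}(X)\big| .
\end{align*}
The first term is $\le\mathbb{E}\,|\eta^{(k)}(X)-\varphi(X)|<\varepsilon$ because conditional expectation is an $L^1$-contraction; the third is $\le\mathbb{E}\,|\varphi(T_m(X))-\varphi(X)|+\varepsilon$ by the triangle inequality and the choice of $\varphi$; and, since $\varphi(T_m(X))$ is $\sigma(T_m(X))$-measurable, the middle term equals $\mathbb{E}\big|\mathbb{E}[\varphi(X)-\varphi(T_m(X))\mid T_m(X)]\big|\le\mathbb{E}\,|\varphi(X)-\varphi(T_m(X))|$. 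It remains to note that $\|T_m(X)-X\|\to0$ in probability together with $\varphi$ being Lipschitz and bounded forces $\varphi(T_m(X))\to\varphi(X)$ in probability and hence, by bounded convergence, in $L^1$. Therefore $\limsup_m\mathbb{E}\big|\mathbb{E}[\eta^{(k)}(X)\mid T_m(X)] - \eta^{(k)}(X)\big|\le2\varepsilon$, and since $\varepsilon$ was arbitrary the claim follows.

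The step I expect to be the real obstacle is the very first reduction, i.e.\ recognizing that one must \emph{not} try to push the Bayes rule of $X$ through $T_m$: the plug-in rule $\mathbf{1}\{\eta^{(k)}(\cdot)\text{ maximal}\}$ is a hard threshold, and indicators of converging random variables need not converge (posterior mass can accumulate exactly at a decision tie), so the naive argument breaks down. Routing everything through the smooth functional $1-\max_k(\cdot)$ and the $L^1$-contraction property of conditional expectation removes this difficulty, after which the only substantive analytic input is the $L^1$-approximation of the arbitrary measurable posteriors by continuous (Lipschitz) functions — precisely the device that lets the hypothesis $\|T_m(X)-X\|\to0$ in probability be used at all, since convergence in probability transfers only across continuous maps. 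The remaining manipulations with conditional expectations are routine.
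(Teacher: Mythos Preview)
The paper does not actually contain a proof of this statement: Theorem~\ref{thm:faragoGyorfi1975} is quoted verbatim from Farag\'o and Gy\"orfi (1975) and is used as a black-box tool on which the paper's own results (Theorems~\ref{dmlKmeans}--\ref{dmlKD-tree}) rest. So there is no in-paper proof against which to compare your proposal.

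That said, your argument is correct and self-contained. The key structural choices are all sound: writing both Bayes risks through the concave functional $1-\max_k(\cdot)$, identifying $\eta_m^{(k)}(T_m(X))=\mathbb{E}[\eta^{(k)}(X)\mid T_m(X)]$ via the tower property (valid because $T_m$ is a measurable function of $X$), and then reducing to $L^1$-convergence of the conditional expectations. The Lipschitz approximation step is exactly the right device to convert the hypothesis $\|T_m(X)-X\|\to 0$ in probability into something usable, since the posteriors $\eta^{(k)}$ are merely measurable; and your bookkeeping of the three-term split, with the $L^1$-contraction of conditional expectation handling the first and middle terms, is clean. The only tacit assumption worth flagging is that $Y$ takes finitely many values, which is the standard classification setting intended here and also the setting of the original Farag\'o--Gy\"orfi paper, so no harm done.
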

\noindent
Let $T_m$ denote the sequence of DML transforms. Note that here $m$ may depend on the training sample size $n$. 
One desirable property of a particular implementation of DML is the Bayes consistency 
of the classifier fitted on transformed data as if on the original data as $m \rightarrow \infty$. That is,  
\begin{equation*}
L_{T_m} \rightarrow L^* ~~\mbox{as}~ m, n\rightarrow \infty. 
\end{equation*} 
This can be seen as follows. Under the statistical learning framework, we have risk decomposition
\begin{eqnarray*}
L_{T_m} - L^* &=& \left( L_{T_m} - L_{T_m}^* \right) + \left( L_{T_m}^*-L^* \right).
\end{eqnarray*}
If the underlying classifier is universally consistent \cite{DevroyeGL1996}, then
\begin{equation*}
L_{T_m} \rightarrow L_{T_m}^* ~~\mbox{as}~ m, n \rightarrow \infty. 
\end{equation*} 
Assuming that the classifier is universally consistent, then by Theorem~\ref{thm:faragoGyorfi1975}, 
we remain to show that $|| T_m(X) - X || \rightarrow 0$ in probability. Indeed such a property is what 
we really meant by distortion minimizing for the sequence of transformations, $T_m$, and here we
make it concrete. 
\\
\\
\textbf{Definition.} Let $T_m(X)$ be a sequence of data transformations indexed by $m$. $T_m$ is {\it distortion
minimizing} if $|| T_m(X) - X || \rightarrow 0$ in probability as $m \rightarrow \infty$, where $|| \cdot ||$ denotes 
the Euclidean norm in $\mathbb{R}^d$.
\\
\\
Next we will show that the two implementations we consider, that is, data grouping 
by quantization (K-means clustering) and rpTrees, both have the distortion minimizing property. 
\subsection{DML by K-means clustering}
K-means clustering could be understood by vector quantization~\cite{QuantError,Quantization} for which
there is a fairly rich body of literature that characterizes the resulting distortion (called {\it quantization error}). 
We will quote a theorem on quantization error after introducing a few definitions and notations. Then immediately 
we see the distortion minimizing property of K-means clustering as a way of transforming the data by assembling 
some known results.
\\
\\
Let a {\it quantizer} $q$ be defined as $q: \mathbb{R}^d \mapsto
\{y_1,\ldots,y_k\}$ for $y_i \in \mathbb{R}^d$. In terminology of quantization, $k$ is the number of codewords 
(cluster centroids). Quantization can be viewed as a data transformation, $X \mapsto q(X)$. For $X$ generated 
from a random source in $\mathbb{R}^d$, the {\it distortion} of quantizer $q$ is defined as: 
\begin{equation*}
\mathcal{D}(q)=\mathbb{E}||X-q(X)||^{\alpha} ~~\mbox{for some constant}~ \alpha>0.
\end{equation*}
This is the mean square error when $\alpha=2$.  Call $R(q)=\log_2 k$ the {\it rate} of
the quantization code.
Define the {\it distortion-rate function} $\delta(R)$ as
\begin{equation*}
\delta(R)=\inf_{q:~R(q) \leq R} \mathcal{D}(q).
\end{equation*}
$\delta(R)$ is the minimal distortion achievable by a quantizer with at most $2^R$ code words. 
$\delta(R)$ can be characterized in terms of the density function
$f(\cdot)$ of $X$. 
\begin{theorem}[\cite{QuantError,Quantization}]
\label{quantTheorem1} Let the data source $X \in \mathbb{R}^d$ have a probability density $f(x)$. 
Then, for large rates $R$, the distortion-rate function of fixed-rate quantization has the following form:
\begin{equation*}
\delta (R) \cong b_{\alpha,d}\cdot||f||_{d/(d+\alpha)} \cdot k^{-\alpha/d},
\end{equation*}
where $\cong$ means the ratio of the two quantities tends to 1,
$b_{\alpha,d}$ is a constant depending on $\alpha$ and $d$, and
\begin{equation*}
||f||_{d/(d+\alpha)}=\left(\int f^{d/(d+\alpha)}(x)dx\right)^{(d+\alpha)/d}.
\end{equation*}
\end{theorem}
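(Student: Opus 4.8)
This is the classical high-rate quantization formula (of Zador type), so the plan is to reconstruct the standard two-sided argument: bound $\delta(R)$ above and below by quantities that are both equal to $(1+o(1))\,b_{\alpha,d}\,||f||_{d/(d+\alpha)}\,k^{-\alpha/d}$ as $k=2^R\to\infty$, under a standard moment condition ($\mathbb{E}||X||^{\alpha+\epsilon}<\infty$ for some $\epsilon>0$). The organizing heuristic for both bounds is the \emph{point-density} picture: a near-optimal $k$-point quantizer distributes its codewords with spatial density $k\,\lambda(x)$ for some $\lambda\ge 0$ with $\int\lambda(x)\,dx=1$; where $f$ is essentially constant, the cells have volume $\approx (k\lambda(x))^{-1}$, and a well-shaped cell of volume $v$ in $\mathbb{R}^d$ carries $\alpha$-th moment $\approx b_{\alpha,d}\,v^{\alpha/d}$, where $b_{\alpha,d}$ is the normalized $\alpha$-th moment of the optimal cell shape (a ball for the leading-order lower bound, the best tessellating polytope for the upper bound, and these agree to the precision we need). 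Integrating against $f$ gives $\mathcal{D}(q)\approx b_{\alpha,d}\,k^{-\alpha/d}\int\lambda(x)^{-\alpha/d} f(x)\,dx$, after which one optimizes over $\lambda$.

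\noindent
\textbf{Upper bound.} First reduce to $f$ compactly supported and bounded above and below on its support: a truncation argument discards a region of small probability whose distortion contribution is controlled by $\int||x||^{\alpha} f(x)\,dx$, incurring error that is $o(1)$ relative to the main term once the truncation level is sent to infinity after $k$. Cover the support by $N$ congruent small cubes on each of which $f\approx f_i$ with mass $p_i$; allocate $k_i\approx k\lambda_i$ codewords to cube $i$ for a chosen $\lambda_i\ge 0$ with $\sum_i\lambda_i=1$, and inside cube $i$ use a rescaled copy of the asymptotically optimal lattice (or simply $k_i$ i.i.d.\ uniform codewords, whose expected per-point $\alpha$-th moment one computes directly). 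This yields
\begin{equation*}
\mathcal{D}(q)\le (1+o(1))\,b_{\alpha,d}\,k^{-\alpha/d}\sum_{i=1}^{N}\lambda_i^{-\alpha/d} p_i,
\end{equation*}
and refining the cube mesh, after $k\to\infty$, replaces the sum by $\int\lambda(x)^{-\alpha/d} f(x)\,dx$.

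\noindent
\textbf{Lower bound.} For \emph{any} $k$-point quantizer with codewords $y_1,\dots,y_k$ and Voronoi cells $S_1,\dots,S_k$, write $\mathcal{D}(q)=\sum_i\int_{S_i}||x-y_i||^{\alpha} f(x)\,dx$ and bound each term below by the $\alpha$-th moment about the best center of a Euclidean ball of the same Lebesgue volume as $S_i$ (using that balls minimize the $\alpha$-th moment among regions of fixed volume). Combining this with a volume/covering accounting --- most of the probability mass lies in cells whose volume is $\gtrsim (k f(x))^{-1}$ on the set where $f$ is nearly constant --- and discretizing $f$ into level sets while using H\"older to absorb the cells that straddle them, one gets
\begin{equation*}
\mathcal{D}(q)\ge (1-o(1))\,b_{\alpha,d}\,k^{-\alpha/d}\int\lambda_q(x)^{-\alpha/d} f(x)\,dx,
\end{equation*}
where $\lambda_q$ is the empirical point density induced by $q$; taking the infimum over $q$ amounts to minimizing the integral over all admissible $\lambda$.

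\noindent
\textbf{Optimization and conclusion.} Both bounds reduce to $\inf\{\int\lambda(x)^{-\alpha/d} f(x)\,dx:\ \lambda\ge 0,\ \int\lambda(x)\,dx=1\}$. Applying H\"older's inequality with conjugate exponents $\tfrac{d+\alpha}{d}$ and $\tfrac{d+\alpha}{\alpha}$ to
\begin{equation*}
\int f^{d/(d+\alpha)}(x)\,dx=\int\left(\lambda(x)^{-\alpha/d} f(x)\right)^{d/(d+\alpha)}\lambda(x)^{\alpha/(d+\alpha)}\,dx
\end{equation*}
gives
\begin{equation*}
\int f^{d/(d+\alpha)}(x)\,dx\le\left(\int\lambda^{-\alpha/d} f\,dx\right)^{d/(d+\alpha)}\left(\int\lambda\,dx\right)^{\alpha/(d+\alpha)},
\end{equation*}
so $\int\lambda^{-\alpha/d} f\,dx\ge\left(\int f^{d/(d+\alpha)}(x)\,dx\right)^{(d+\alpha)/d}=||f||_{d/(d+\alpha)}$, with equality for $\lambda\propto f^{d/(d+\alpha)}$. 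Substituting this optimal $\lambda$ into the matching upper and lower bounds yields $\delta(R)\cong b_{\alpha,d}\,||f||_{d/(d+\alpha)}\,k^{-\alpha/d}$. The main obstacle is the lower bound: making the ``every cell behaves like a ball of volume $(kf(x))^{-1}$'' picture rigorous \emph{uniformly over all quantizers} requires careful control of atypical cells --- unusually large ones, cells meeting the boundary of the support, and cells where $f$ is not nearly constant --- and this bookkeeping is the technical heart of the theorem; by comparison the upper-bound construction and the H\"older step are routine.
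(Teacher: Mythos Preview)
Your sketch is a faithful reconstruction of the classical Zador argument and is essentially correct as an outline: the point-density heuristic, the companding/partition upper bound, the ball-rearrangement lower bound, and the H\"older optimization yielding $\lambda\propto f^{d/(d+\alpha)}$ are exactly the ingredients of the proofs in \cite{QuantError,Quantization}. Note, however, that the paper does not prove this theorem at all --- it is quoted as a known result from the quantization literature and used as a black box to derive Theorem~\ref{dmlKmeans}. So there is no ``paper's own proof'' to compare against here; your proposal goes well beyond what the paper supplies. If you want to turn your sketch into a self-contained proof, the place that needs real work is, as you already flag, the lower bound: controlling the contribution of atypical Voronoi cells uniformly over all quantizers (and handling the tail when $f$ is not compactly supported, which is where the moment condition $\mathbb{E}\|X\|^{\alpha+\epsilon}<\infty$ enters) is nontrivial and is precisely the content of Zador's and Bucklew--Wise's papers.
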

\noindent
A consequence of Theorem~\ref{quantTheorem1} is the distortion minimizing property 
of K-means clustering as a way of transforming the data.
\begin{theorem}
\label{dmlKmeans} Let data source $X \in \mathbb{R}^d$ have a probability density $f(x)$. Let $\{z_1,...,z_k\}$
be the population cluster centroids of K-means clustering on $X$. Let $T_k(X)$ be a transformation that maps $X$ 
to its closest cluster centroid. Then
\begin{equation*}
|| T_k(X) - X || \rightarrow 0 
\end{equation*}
in probability as $k \rightarrow \infty$, where $|| \cdot ||$ denote the Euclidean distance in $\mathbb{R}^d$.
\end{theorem}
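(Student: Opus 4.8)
The plan is to recognize $T_k$ as exactly a nearest-neighbour quantizer and then read off the conclusion from the quantization-rate asymptotics in Theorem~\ref{quantTheorem1}. Concretely, the transformation that sends $X$ to its closest population centroid among $\{z_1,\dots,z_k\}$ is the quantizer $q_k$ with codebook $\{z_1,\dots,z_k\}$ and nearest-neighbour encoding, so $\mathcal{D}(T_k)=\mathbb{E}\|X-T_k(X)\|^2$ is its mean-square distortion (the case $\alpha=2$). By definition the population $K$-means centroids minimize the within-cluster sum of squares $\mathbb{E}\|X-q(X)\|^2$ over all $k$-point quantizers; existence of such a minimizer under $\mathbb{E}\|X\|^2<\infty$ is classical (e.g. \cite{Pollard1981}), and since an optimal quantizer always uses nearest-neighbour encoding, $\mathcal{D}(T_k)$ equals the optimal $k$-point distortion, i.e. $\delta(\log_2 k)$ at $\alpha=2$.

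First I would invoke Theorem~\ref{quantTheorem1} with $\alpha=2$ and rate $R=\log_2 k$: it gives $\delta(\log_2 k)\cong b_{2,d}\,\|f\|_{d/(d+2)}\,k^{-2/d}$, and since $2/d>0$ the right-hand side tends to $0$ as $k\to\infty$; hence $\mathbb{E}\|X-T_k(X)\|^2\to 0$, that is, $T_k(X)\to X$ in $L^2$. Then, for any $\varepsilon>0$, Chebyshev's inequality yields $\mathbb{P}\big(\|T_k(X)-X\|>\varepsilon\big)\le \varepsilon^{-2}\,\mathbb{E}\|X-T_k(X)\|^2\to 0$, which is exactly convergence in probability and finishes the argument.

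Two technical points deserve attention, and I expect the condition-checking to be the only real obstacle. Theorem~\ref{quantTheorem1} is stated for large rates and carries the implicit integrability requirement $\|f\|_{d/(d+2)}=\big(\int f^{d/(d+2)}(x)\,dx\big)^{(d+2)/d}<\infty$; this holds for compactly supported or light-tailed densities but can fail for very heavy tails, and one should also keep the standing assumption $\mathbb{E}\|X\|^2<\infty$ so that the optimal quantizer exists. When the integrability holds, the route above is immediate. If one wishes to avoid any tail hypothesis beyond $\mathbb{E}\|X\|^2<\infty$, I would instead prove directly that $\delta(R)\to 0$: given $\varepsilon>0$, pick a ball $B$ with $\mathbb{E}\big[\|X\|^2\,\mathbf{1}_{\{X\notin B\}}\big]<\varepsilon$, partition $B$ into finitely many cells of diameter at most $\varepsilon$, and use one codeword per cell plus one extra codeword for $B^c$; this exhibits, for $k$ large, a $k$-point quantizer with distortion at most $\varepsilon^2+O(\varepsilon)$, so $\delta(\log_2 k)\to 0$ and the remaining Chebyshev step goes through verbatim. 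Either way the mechanism is the same: $L^2$-vanishing of the quantization distortion upgrades, via Chebyshev, to the stated convergence in probability, which is precisely the distortion-minimizing property in the sense of the Definition above.
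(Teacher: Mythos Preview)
Your proposal is correct and follows essentially the same route as the paper: identify $T_k$ with the optimal $k$-point quantizer, invoke Theorem~\ref{quantTheorem1} to see that the $\alpha$-distortion decays like $k^{-\alpha/d}$, and then apply Markov/Chebyshev to upgrade this to convergence in probability. Your write-up is in fact more careful than the paper's in two respects---you explicitly justify why $\mathcal{D}(T_k)=\delta(\log_2 k)$ (the paper silently uses this), and you flag the integrability hypothesis $\|f\|_{d/(d+\alpha)}<\infty$ implicit in Theorem~\ref{quantTheorem1}, even sketching a direct covering argument to bypass it---but the underlying mechanism is identical.
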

\begin{proof}
Please see Appendix. 
\end{proof}
\noindent
\textbf{Remarks}. What Theorem~\ref{dmlKmeans} has established is the shrink in distance of the data to it {\it population}
cluster centroids. For empirical data, the cluster centroids 
found by K-means clustering converge {\it almost surely} to the population ones by the strong consistency of K-means 
clustering \cite{Pollard1981}. Thus the distance between the data and the respective empirical K-means 
cluster centroids shrinks in probability.
\\
\\
A distributed version of Theorem~\ref{dmlKmeans} can also be established which we state as follows.
\begin{theorem}
\label{dmlD} Let data source $X \in \mathbb{R}^d$ have a probability density $f_1, f_2, ..., f_J$, with
a proportion of $p_1, p_2,...,p_J$, respectively, at the $J$ distributed sites $S_1, S_2, ..., S_J$ where $p_1+p_2+...+p_J=1$. Let  
data transformation $X \mapsto T(X)$ be formed by concatenating $J$ distortion minimizing transforms, 
$T^{(j)}$ at site $S_j$ for $j=1,...,J$. That is, 
\begin{equation*}
T(X)=T^{(j)}(X) ~\mbox{if}~ X \in S_j, j=1,2,...,J.
\end{equation*}
Then, $T$ is distortion 
minimizing at $\cup_{j=1}^J S_j$.
\end{theorem}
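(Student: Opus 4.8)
The plan is to reduce the distributed statement to the single-site distortion minimizing property by conditioning on the site label. Write each local transform $T^{(j)}$ as a sequence $T^{(j)}_{m_j}$ indexed by a local resolution parameter $m_j$ (the number of codewords for $K$-means, or the number of leaves / the depth for rpTrees), so that the concatenated transform $T$ is naturally indexed by $m=\min_{1\le j\le J} m_j$; under this convention ``$T$ is distortion minimizing'' means $\|T(X)-X\|\rightarrow 0$ in probability as $m\rightarrow\infty$.

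First I would fix $\epsilon>0$ and apply the law of total probability over site membership. Since $X\in S_j$ with probability $p_j$, the conditional law of $X$ given $X\in S_j$ is exactly $f_j$, and on that event $T(X)=T^{(j)}_{m_j}(X)$, so
\begin{equation*}
\mathbb{P}\big(\|T(X)-X\|>\epsilon\big)=\sum_{j=1}^J p_j\,\mathbb{P}\big(\|T^{(j)}_{m_j}(X)-X\|>\epsilon \mid X\in S_j\big).
\end{equation*}
Each summand is the tail probability of the $j$-th local transform measured against its own data distribution $f_j$; by hypothesis $T^{(j)}$ is distortion minimizing (for the $K$-means case this is Theorem~\ref{dmlKmeans} applied with density $f_j$, for rpTrees it is the corresponding statement), so each conditional probability tends to $0$ as $m_j\rightarrow\infty$.

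The last step is to pass the limit through the finite sum. Because $J$ is finite, each term is dominated by $p_j\le 1$ and tends to $0$, while $\sum_j p_j=1$; hence, given $\eta>0$, choose $M$ so that every conditional probability is below $\eta$ whenever $m_j\ge M$, and then the right-hand side is below $\eta$ once $m=\min_j m_j\ge M$. Therefore $\|T(X)-X\|\rightarrow 0$ in probability, i.e.\ $T$ is distortion minimizing on $\cup_{j=1}^J S_j$.

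I expect the only real subtlety to be bookkeeping rather than analysis: being precise that the local transforms are sequences (so that ``$m\rightarrow\infty$'' for the concatenated transform is meaningful), and that conditioning on the site label leaves intact the fact that $T^{(j)}$ was constructed for, and is distortion minimizing with respect to, the site-$j$ distribution $f_j$. No uniformity of the convergence across $j$ is required precisely because $J$ is finite, so the argument needs nothing beyond the single-site results already established. (Under the paper's standing assumption that all sites' data live in the common $\mathbb{R}^d$, there is also no issue with the Euclidean norm in the statement being ambiguous.)
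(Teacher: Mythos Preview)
Your proposal is correct and follows essentially the same route as the paper: condition on the site label, use $T(X)=T^{(j)}(X)$ on $\{X\in S_j\}$ to rewrite the tail probability as $\sum_{j=1}^J p_j\,\mathbb{P}(\|T^{(j)}(X)-X\|\ge\epsilon\mid X\in S_j)$, and then invoke the single-site distortion minimizing property (Theorem~\ref{dmlKmeans}) on each summand, letting $\min_j m_j\to\infty$. Your version is in fact a bit more careful than the paper's about the indexing convention and the finite-$J$ limit passage, but there is no substantive difference in the argument.
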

\begin{proof}
Please see Appendix.
\end{proof}
\subsection{DML by space-partitioning trees}
Data transform by space-partitioning trees works by first recursively partitioning the space in a tree fashion, and 
then map data points within the same tree leaf node to the node centroid. To 
establish the distortion minimizing property of DML by space-partitioning trees, by our previous argument and 
an empirical version of Theorem~\ref{thm:faragoGyorfi1975} (c.f. Problem~32.5 in \cite{DevroyeGL1996}), 
it is enough to show that the diameter of the tree leaf node vanishes. By Theorem~\ref{dmlD}, we are only required 
to show the local (non-distributed) version. We will show this for a simple case, space partition by k-d 
trees that cuts along the median (that is, values smaller than the median become the left child node and larger the right 
child node). The similar idea, along with some technical work, shall extend to the more general case of cutting by a
point selected uniformly at random along the random projection, and we leave that to future work.
\begin{theorem}
\label{dmlKD-tree} Let data source $X \in \mathbb{R}^d$ have a probability density $f(x)$. 
Let $T(X)$ be a transformation that maps $X$ to the centroid of data points in the same
tree leaf node of a k-d tree that cuts by median. A node in a k-d tree stops growing once it has 
less than $K_n=n/(2^k)$ points. Then
\begin{equation*}
|| T(X) - X || \rightarrow 0 
\end{equation*}
in probability as $n \rightarrow \infty$ and $n/(k2^k) \rightarrow \infty$ as $k \rightarrow \infty$, 
where $|| \cdot ||$ denote the Euclidean distance in $\mathbb{R}^d$.
\end{theorem}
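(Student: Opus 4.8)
The plan is to show that the diameter of each leaf node of the median-cut k-d tree vanishes (in a suitable probabilistic sense) as $k \to \infty$, and then invoke the earlier machinery (Theorem~\ref{thm:faragoGyorfi1975} and its empirical version, Problem~32.5 in \cite{DevroyeGL1996}) to conclude $\|T(X) - X\| \to 0$ in probability. Since $T(X)$ is the centroid of the leaf node containing $X$, we automatically have $\|T(X) - X\| \le \mathrm{diam}(\text{leaf containing } X)$, so it suffices to control leaf diameters. The tree has depth $k$ (each of the $k$ rounds halves the point count, stopping at $K_n = n/2^k$ points per leaf), and in a median cut the splitting coordinate is cycled (or chosen) among the $d$ coordinates, so after $k$ splits each coordinate has been cut roughly $k/d$ times.

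The key steps, in order. First, I would reduce to a single coordinate: fix coordinate $j$ and track the sequence of sub-intervals (along the $j$-th axis) of the nested nodes from the root down to the leaf containing a typical point $X$. Along the path, coordinate $j$ is used as the split variable about $k/d$ times; at each such split the node is divided at the \emph{empirical} median of the $j$-th coordinate of the points currently in the node. Second, I would argue that an empirical-median split approximately halves the probability mass of the node's conditional distribution along coordinate $j$: with $K_n \to \infty$ (guaranteed by $n/(k 2^k) \to \infty$, which forces $n/2^k \to \infty$), the empirical median concentrates near a population conditional median, so the two children each carry close to half the conditional mass. Iterating, after $m \approx k/d$ cuts on coordinate $j$ the leaf's $j$-th marginal mass is $\approx 2^{-k/d} \to 0$. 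Third, I would convert "small conditional mass along every coordinate" into "small diameter": because $X$ has a density $f$, a set whose projection onto each axis has vanishing mass must itself shrink to a point (for $\mu$-almost every center), e.g. via a Lebesgue-density / absolute-continuity argument, or by first truncating to a large compact box on which $f$ is bounded below on a set of near-full mass and then using that on such a box small mass forces small side-length. Fourth, I would handle the exceptional events — nodes where the empirical median is a poor estimate of the population median, or where the density is very small — by showing they have total probability $o(1)$; this is where the requirement $n/(k 2^k) \to \infty$ does its work, since it guarantees all $2^k$ leaves simultaneously retain enough points ($K_n$ large) for uniform concentration of the empirical medians across the whole tree via a union bound over the $O(2^k)$ internal nodes.

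The main obstacle will be Step three combined with Step four: translating the geometric-decay-of-mass statement into an honest $\|T(X)-X\| \to 0$ in probability, uniformly enough over the tree, when $f$ is an arbitrary density with no lower bound, no compact support, and no smoothness. Controlling the empirical medians simultaneously across all $2^k$ nodes requires that the per-node sample size $K_n = n/2^k$ grow fast enough to beat a union bound over $\Theta(2^k)$ nodes — this is exactly the role of the hypothesis $n/(k2^k)\to\infty$ — and one must be careful that a node can become "thin" in a coordinate without its conditional density being controlled, so a preliminary truncation to a compact set carrying mass $1-\epsilon$ on which $f \ge c_\epsilon > 0$ is the cleanest route, after which the cycled-coordinate median cuts force each leaf side-length to $0$. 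Everything else — the deterministic bound $\|T(X)-X\|\le\mathrm{diam}(\text{leaf})$, the cycling of split coordinates giving $k/d$ cuts per axis, and the final appeal to Theorem~\ref{thm:faragoGyorfi1975} via Problem~32.5 of \cite{DevroyeGL1996} — is routine once the leaf-diameter estimate is in place.
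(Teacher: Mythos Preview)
Your overall strategy matches the paper's: define ``good'' cuts as empirical-median splits that are close enough to the true median that each child carries at most a $(1/2)\sqrt{1+\epsilon}$ fraction of the parent's mass, bound the probability of a bad cut at a node of size $N$ by an exponential tail $\exp(-cN)$, and take a union bound over the $O(2^k)$ internal nodes---this is precisely where the hypothesis $n/(k2^k)\to\infty$ enters, exactly as you identified. With all cuts good, each coordinate along a root-to-leaf path is split roughly $k/d$ times and the leaf shrinks geometrically.

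The one substantive difference is your Step~3, the mass-to-diameter conversion. You propose a truncation to a compact set where $f\ge c_\epsilon>0$, so that small projected mass forces small side-length. This can be made to work but is the delicate part, as you correctly flag. The paper sidesteps it entirely with a cleaner device: since the k-d tree structure (which points land in which leaf) is invariant under coordinate-wise monotone transformations, one may apply the marginal CDF transform in each coordinate and assume without loss of generality that every marginal is uniform on $[0,1]$. In that case a good cut directly shrinks the side-\emph{length} by a factor $\sqrt{1+\epsilon}/2$, so after $\lfloor k/d\rfloor$ good cuts on a given coordinate the side-length is at most $(\sqrt{1+\epsilon}/2)^{\lfloor k/d\rfloor}\to 0$. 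This collapses your Steps~2 and~3 into a single geometric statement and avoids any truncation or density-lower-bound argument. (Your count of $\approx k/d$ cuts per coordinate along a root-to-leaf path is the right one; the paper's text writes $(2^k-1)/d$, which counts all internal nodes rather than path length, but the conclusion is the same.)

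One minor framing point: the appeal to Theorem~\ref{thm:faragoGyorfi1975} and Problem~32.5 of \cite{DevroyeGL1996} is not needed to prove \emph{this} theorem. The statement here is exactly $\|T(X)-X\|\to 0$ in probability, which follows once you have $\mathrm{diam}(\text{leaf containing }X)\to 0$ in probability; Farag\'o--Gy\"orfi is the downstream step that converts this into a Bayes-risk statement, not an ingredient of the present proof.
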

\begin{proof}
Please see Appendix.
\end{proof}
\section{A vignette of learning and inference tools in distributed setting}
\label{section:vignetteTools}
To provide a vignette of tools for learning and inference in a distributed setting, we explore the 
performance of three popular methods, namely, linear regression, logistic regression, and RF. Linear regression 
\cite{Rice1995} is one of the most fundamental methods and the simplest modeling tools in 
statistics. Logistic regression \cite{McCullaghNelder1989} is commonly viewed as one of the most popular models 
used in the industry, and $L_1$ regularization \cite{DonohoJohnstone1994,Tibshirani1996,ParkHastie2007, glmnet2010} 
has emerged as a computationally effective approach that achieves a balance of prediction performance and model 
selection. RF is widely acknowledged as one of the most powerful tools in statistics and machine learning \cite{Caruana2006,caruanaKY2008, HTF2009, TACOMA, deepTacoma}. 
\\
\\
For the rest of this section, we will briefly describe linear regression, $L_1$ logistic regression, and RF to make this writing self-contained.
\subsection{Linear regression}
\label{section:vignetteLm}
Linear regression is a fundamental  problem in statistics that aims to uncover a linear relationship between a response variable and some explanatory variables
\begin{equation*}
Y=X \beta +\epsilon
\end{equation*}
with noise term $\epsilon$ such that $\mathbb{E}\epsilon=0$ and $Var(\epsilon)=\sigma^2\bm{I}$. Given a data sample, $(X_i, Y_i), i=1,2,...,n$, the coefficients $\beta$ can be estimated by solving 
\begin{equation*}
\arg\min_{\beta}(Y - X\beta)^T (Y- X\beta),
\end{equation*}
assuming the model fitting is by least square error. The solution is given by 
\begin{equation*}
\hat{\beta} = (X^TX)^{-1}X^TY.
\end{equation*}
Next we will briefly describe how linear regression may work for distributed data. The general framework is illustrated 
in Figure~\ref{figure:distInfArch}, and here we will give more details. 
\\
\\
Assume that there are $J$ distributed sites. Let $\left(\tilde{X}_i^{(s)}, \tilde{Y}_i^{(s)}\right), i=1, 2, ..., N_s$ be the group 
(i.e., cluster in K-means clustering, or tree leaf node in rpTrees) centroids at site $s$ for $s=1,2,...,J$. These group centroids will
be used as the representative points. We will perform a {\it weighted} linear regression on the set of all representative points
\begin{equation*} 
\mathcal{D}_r = \bigcup_{s=1}^J \left\{ \left(\tilde{X}_i^{(s)}, \tilde{Y}_i^{(s)} \right):  i=1, 2, ..., N_s \right\},
\end{equation*}
where $N_s$ are the number of groups at site $s$ for $s=1, 2, ..., J$. Here the weight for each point in $\mathcal{D}_r$ 
is taken as the number of points in the same group. Such a choice of weighting allows to minimize the $L^2$ error (i.e., 
quantization error) between the full data and the representative set. An algorithmic description is given as Algorithm~\ref{algorithm:distLM}.\\
\begin{algorithm}
\caption{~~Linear regression for distributed data} 
\label{algorithm:distLM}
\begin{algorithmic}[1]
\STATE $\mathcal{D}_r \gets \emptyset$, $\mathcal{W}_r \gets \emptyset$;
\FOR {Each $i$ in $\{1,...,S\}$} 
\STATE Apply DML on data at site s; 
\STATE Let $(\tilde{X}_i^{(s)}, \tilde{Y}_i^{(s)}), i=1, 2, ..., N_s$ be group centroids at site $s$;  
\STATE Let $W_i^{(s)}, i=1, 2, ..., N_s$ be the group sizes; 
\STATE $\mathcal{D}_r \gets \mathcal{D}_r \cup \{ (\tilde{X}_i^{(s)}, \tilde{Y}_i^{(s)} ):  i=1, 2, ..., N_s \}$;
\STATE $\mathcal{W}_r \gets \mathcal{W}_r \cup \{ W_i^{(s)}:  i=1, 2, ..., N_s \}$;
\ENDFOR 
\STATE Perform weighted linear regression on $\mathcal{D}_r$ with $\mathcal{W}_r$ as weight;
\end{algorithmic}
\end{algorithm} 
\subsection{Logistic regression}
\label{section:vignetteGlm}
Logistic regression formulates the log odds ratio of the posterior 
probability as a linear model of covariates
\begin{equation*}
\log \frac{P(Y=1 |X=x)}{P(Y=0 |X=x)} = X\beta
\end{equation*}
where $Y \in \{0,1\}$ are called labels, and $X$ are covariates. The coefficients $\beta$ can be estimated by a data 
sample, $(X_i, Y_i), i=1,2,...,n$, with maximum likelihood estimation, or the more modern gradient (or coordinate) 
descent types of algorithms (the {\it glmnet} package \cite{glmnet2010} is used as the solver for this work). When 
the number of variables is large, often a regularization is introduced. We consider $L_1$ regularization here, 
given its popularity in modern high dimensional statistical models \cite{BuhlmannVandeGeer2011}.
\\
\\
As the overall framework is almost the same (except a {\it glmnet} is used in place of regression, where weights of 
individual data points are incorporated via the {\it weights} parameter of the {\it glmnet}) as that for the linear regression 
case, we omit it here. 
\subsection{Random Forests}
\label{section:vignetteRF}
RF \cite{RF} is an ensemble of decision trees with each tree constructed on a bootstrap sample of the data. 
Each tree is built by recursively partitioning the data. At each node (the root node corresponds to the bootstrap 
sample), RF randomly samples a number of features (or sets of features) and then select 
one that would lead to an ``optimal" partition of that node. This process is continued recursively until the tree is 
fully grown, that is, only one data point is left at each leaf node or all pints in the node are of the same class label. 
The superior empirical performance of RF has been demonstrated by a 
number of studies \cite{RF,Caruana2006, caruanaKY2008,HolmesKapelner2009,TACOMA, deepTacoma}. RF is easy to use 
(e.g., very few tuning parameters), has a remarkable built-in ability for feature selection. We use the R package  
{\it randomForest} in this work. The algorithmic description in distributed setting is also similar to that in linear 
regression, and omitted here.
\section{Experiments}
\label{section:experiments}
In this section, we report experimental results. This includes simulations on linear regression, 
$L_1$ logistic regression and RF. We will compare the performance on distributed data to that when the full
data are in one place (called {\it non-distributed} setting; note that no DML is applied). We start by simulations 
on a toy example in Section~\ref{section:expToy}.
Simulations on linear regression, $L_1$ logistic regression and RF are described in Section~\ref{section:expLm}, 
and Section~\ref{section:expL1LogitRF}, respectively. We also carry out simulations to 
compare the performance on distributed data and the average performance of distributed sites assuming each 
site uses only its own data. This is described in Section~\ref{section:furtherExp}. To assess the feasibility of 
our proposed approach to real world data, we conduct experiments on a number of datasets taken from the UC
Irvine Machine Learning Repository \cite{UCI} in Section~\ref{section:expUCI}.
\subsection{An illustrating toy example}
\label{section:expToy}
We start with a toy example. This serves the purpose of helping readers better appreciate the conventional wisdom 
that more data would typically lead to better statistical inference. We consider a simple linear model
\begin{equation*}
y=20+2x+\epsilon,
\end{equation*}
where to exaggerate the bias effect of using partial data, we assume $\epsilon \sim \mathcal{N}(0,50)$. Suppose 
the data are stored over three distributed sites according to the range of the value of the independent variable $x$ by 
$(0, 20)$, $[20,80)$, and $[80,\infty)$, respectively. One can imagine that variable $x$ indicates the age of people 
in some population, and variable $y$ is some health status index. Moreover, the data are split into three distributed 
sites because people of different ages visit specific medical organizations. Here for illustration purpose, we assume 
the response $y$ depends on only one variable, $x$. 
\begin{figure}[!htb]
\centering
\begin{center}
\hspace{-10pt}
\includegraphics[scale=0.42,clip, angle=0]{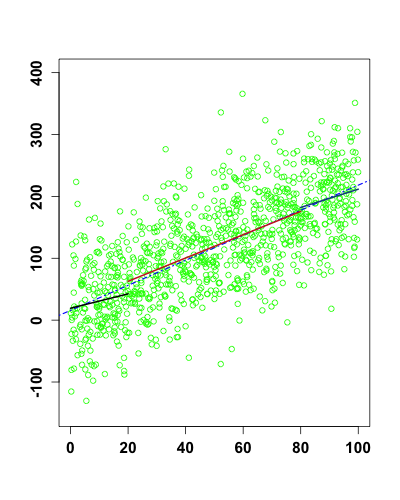}
\includegraphics[scale=0.42,clip, angle=0]{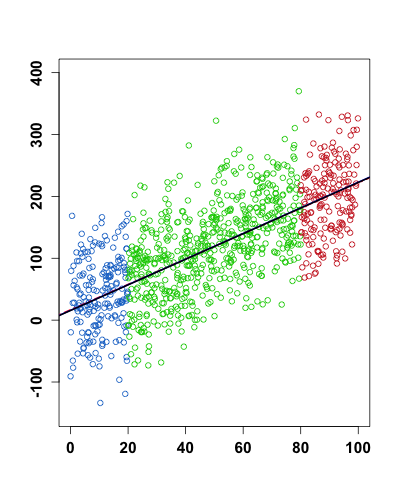}
\end{center}
\abovecaptionskip=-10pt
\caption{\it Linear regression for the toy example. The dot dashed line is the regression line on the full data, while all 
other line segments indicate regression line at individual sites. The right panel shows the lines fitted over the full
data, the fitted regression lines with either K-means clustering or rpTrees; the three completely overlap.} 
\label{figure:lmIllus}
\end{figure}
The left panel in Figure~\ref{figure:lmIllus} is scatter plot of 
the data points with regression lines produced on data from individual distributed sites; the dashed line is the regression 
line in non-distributed setting. There is a noticeable departure of the regression lines fitted by data from individual sites
from the dashed line, even though all the data are drawn from the same generating model (with different support though). 
The right panel in Figure~\ref{figure:lmIllus} shows the regression lines fitted with DMLs, K-means clustering 
or rpTrees, in a distributed setting. It can seen that these lines completely overlap with the regression line in non-distributed 
setting; all agree well with the generating model. This suggests that combining data from distributed sites is desirable, 
and is potentially feasible with our proposed framework.
\subsection{Linear regression}
\label{section:expLm}
More experiments are carried out for multivariate linear regression. The independent variables are generated 
from multivariate Gaussians in $\mathbb{R}^6$ with mean $\mu=(2,0,...,0)$ and covariance matrix 
\begin{equation*}
\Sigma_{i,j}=\rho^{|i-j|}, i,j=1,2,...,6 ~~\mbox{for}~~ \rho=0.1, 0.3, 0.6.
\end{equation*}
The dependent variable is generated according to the following linear model
\begin{equation*}
y = 2 + 3 \cdot x_1 +0.5 \cdot x_3 + 1.2 \cdot x_4 + x_6 +\epsilon
\end{equation*}
with $\epsilon \sim \mathcal{N}(0,1)$.
\\
\\
We consider two distributed sites under two different settings
\begin{itemize}
\item[I.] Site 1 has data with the first independent variable $x_1$ having support $(-\infty, 2)$, and Site 2 
with support  $[2,\infty)$
\item[II.] Site 1 has a random selection of $40\%$ of the data, and Site 2 the rest.
\end{itemize}
To evaluate the fitting, we define the mean squared error of fitted coefficients relative to the true coefficients, 
$\beta=(\beta_0,\beta_1,...,\beta_d)$
\begin{equation*}
MSE=\frac{1}{T}\sum_{l=1}^T \sum_{j=0}^d (\hat{\beta}_j -\beta_j)^2,
\end{equation*}
where $T$ is the number of runs of the simulations, and $j=0$ corresponds to the intercept.
\begin{table}[htp]
\begin{center}
\resizebox{0.9\textwidth}{!}{%
\begin{tabular}{c|cc|cc|cc}
\hline
   & \multicolumn{2}{|c|}{$\bm{\rho=0.1}$}   	& \multicolumn{2}{|c|}{$\bm{\rho=0.3}$}  & \multicolumn{2}{|c}{$\bm{\rho=0.6}$} \\
    \hline
$\beta_0$  &2.0001 &2.0000      &1.9997 &2.0001      &1.9997 &2.0003 \\
                  &2.0000  &2.0005     &1.9994 &2.0001      &2.0009 &2.0003 \\
                  &1.9982  &1.9998     &1.9985 &2.0008      &2.0018 &1.9985 \\
$\beta_1$  &2.9998 &2.9996      &3.0007 &2.9999       &3.0000 &2.9999\\
                  &2.9998 &2.9998      &3.0008 &2.9998       &2.9994 &3.0000\\
                  &3.0014 &3.0001      &3.0012 &2.9994       &2.9990 &3.0007\\
$\beta_2$  &0.0002  &-0.0004    &-0.0001 &0.0005      &-0.0005 &0.0009\\
                  &0.0005  &-0.0006    &-0.0005 &0.0005      &-0.0002 &0.0012\\
                  &-0.0004 &0.0003     &0.0006  &0.0000      &0.0006  &-0.0004\\
$\beta_3$  &0.4997  &0.5004     &0.5000 &0.4989       &0.5000 &0.4990 \\
                  &0.4993  &0.4997     &0.5002  &0.4987       &0.5000 &0.4986\\    
                  &0.4998  &0.5005     &0.4987  &0.4998       &0.4979 &0.4994\\ 
$\beta_4$  &1.1996  &1.2000     &1.2003 &1.2000        &1.2002  &1.1993\\
                  &1.1994  &1.2002     &1.2000 &1.2001        &1.2006   &1.1995\\
                  &1.2000  &1.2002     &1.2011  &1.2000        &1.2015   &1.1995\\
$\beta_5$  &-0.0003 &0.0004     &-0.0007 &0.0001       &0.0008  &-0.0004\\
                  &0.0000  &0.0001     &-0.0006 &0.0003        &0.0004 &-0.0005\\
                  &-0.0006 &-0.0005    &-0.0009 &0.0004        &0.0001  &-0.0008\\
$\beta_6$  &0.9993 &1.0003      &0.9998 &1.0000         &1.0002  &1.0004\\
                  &0.9992  &1.0005     &0.9995 &0.9997          &1.0002  &1.0004\\
                  &0.9993  &0.9998     &1.0008  &1.0005         &0.9999  &0.9998\\
\hline
MSE           &2.9163  &2.9291    &3.1175 &3.0371          &4.9654 &5.0052\\
($10^{-4}$)  &3.2749  &3.5739   &3.7341 &3.8144          &6.0830 &7.0598\\
                    &3.4861  &4.0286   &3.9084 &4.0684          &7.3849 &7.2617\\
\hline
\end{tabular}}
\end{center}
\caption{\it Fitted coefficients of linear model. In each cell, there are two columns--the first column for setting I 
and the second for setting II. In each cell, there are three rows; the first is for non-distributed, 
the second is for distributed under K-means clustering, and the third for distributed under rpTrees.  
} \label{table:lm}
\end{table}
\normalsize
\\
\\
A total of 40000 data points are generated, and the results are averaged over 100 runs. Table~\ref{table:lm} 
shows the fitted regression coefficients of the linear model in non-distributed setting, and that under K-means clustering 
and rpTrees. The data compression ratio for K-means clustering is 40:1, and approximately 40:1 for rpTrees (the 
maximum tree node size is 40). We see that, in all cases, the discrepancy in the regression coefficients for 
distributed and non-distributed data is small. The MSE is also small when comparing fitted coefficients to the true
ones.

\subsection{$L_1$ logistic regression and RF}
\label{section:expL1LogitRF}
We conduct experiments with $X$ being generated by a 4-component Gaussian mixture specified as 
\begin{equation*}
\frac{1}{4}\sum_{i=1}^4 \mathcal{N}(\mu_i, \Sigma) 
\end{equation*}
with the covariance matrix $\Sigma$ defined by 
\begin{equation*}
\Sigma_{i,j}=\rho^{\vert i-j \vert}, ~~\mbox{for}~ \rho=0.1, 0.3, 0.6.
\end{equation*}
The mixture centers are generated as follows. All components of $\mu_1 \in \mathbb{R}^{100}$ are generated uniformly
at random from the interval $[0,0.4]$. $\mu_2$ has the $1^{st}$ half of its components  (i.e., the first 50) the 
same as $\mu_1$, while the other half negative of those of $\mu_1$, $\mu_3=-\mu_2$ and $\mu_4=-\mu_1$.
The mixture component ID of each data point is used as the class label $Y$. We use the classification accuracy on the test 
set as the performance metric.
\begin{figure}[h]
\centering
\begin{center}
\hspace{-0.5cm}
\includegraphics[scale=0.54,clip]{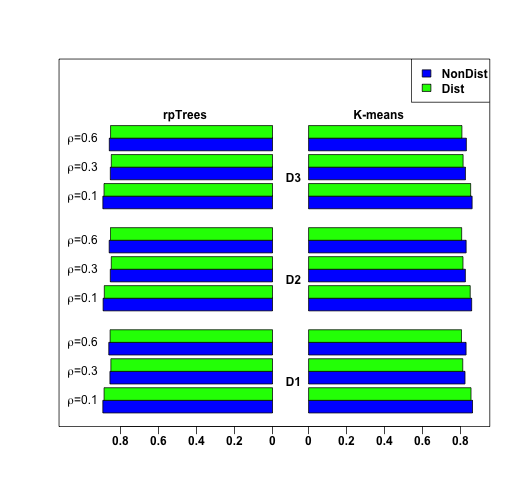}
\end{center}
\abovecaptionskip=-20pt
\caption{\it Test set accuracy of $L_1$ logistic regression under simulation setting $D_1, D_2, D_3$, and different $\rho$'s. 
{\it `NonDist'} means all the data are in one place and no DML is applied, and {\it `Dist'} indicates our proposed distributed algorithm.  } 
\label{figure:logits}
\end{figure}
For simplicity, we only consider the case where there are two sites in a distributed environment. It should be easy
to extend to more distributed sites. Let $\mathcal{C}_i, i=1, 2, 3, 4$, denote the data from the four mixture components. 
To simulate how the data may be partitioned in a distributed environment, we create the following three scenarios:
\begin{itemize}
\item[$D_1$:] Site 1 has $\mathcal{C}_1 + \mathcal{C}_2$, and Site 2 has those from $\mathcal{C}_3 + \mathcal{C}_4$ \vspace{-0.05in}
\item[$D_2$:] Site 1 has $\frac{1}{2}\mathcal{C}_1 + \mathcal{C}_2+ \frac{1}{2}\mathcal{C}_3$, and Site 2 has $ \frac{1}{2}\mathcal{C}_1 + \frac{1}{2}\mathcal{C}_3 + \mathcal{C}_4$\vspace{-0.05in}
\item[$D_3$:] Site 1 has a randomly selected half of the data and Site 2 the rest
\end{itemize}  
where $\frac{1}{2} \mathcal{C}_1$ means that a distributed site contains half of $\mathcal{C}_1$, 
and so on. 4000 data points are generated for training, and 1000 for test. The data compression ratio is 4:1 for K-means 
clustering and approximately 4:1 for rpTrees. 
\begin{figure}[h]
\centering
\begin{center}
\hspace{0cm}
\includegraphics[scale=0.54,clip]{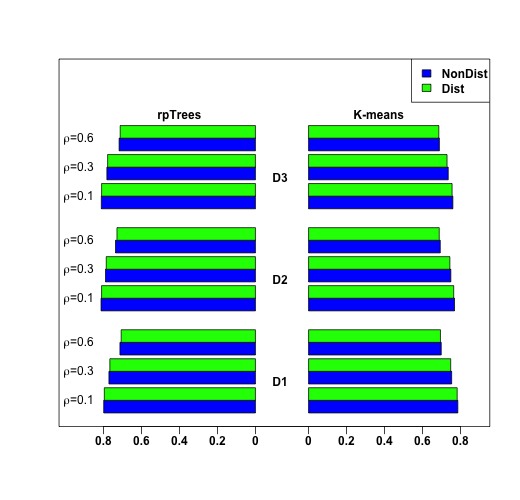}
\end{center}
\abovecaptionskip=-20pt
\caption{\it Test set accuracy of RF under simulation setting $D_1, D_2, D_3$, and different $\rho$'s. 
{\it `NonDist'} means all the data are in one place and no DML is applied, and {\it `Dist'} indicates our distributed algorithm. } 
\label{figure:RF}
\end{figure}
Figure~\ref{figure:logits} shows the classification accuracy of $L_1$ logistic 
regression for distributed and non-distributed data in different settings. It can be seen that, in all cases, the loss in classification 
accuracy due to distributed computing is small. 
\\
\\
The simulation settings for RF are the same as those for $L_1$ logistic regression. The number of trees in RF is set to be 100, 
and the {\it MTRY} parameter is taken from $\{\lfloor \sqrt{d} \rfloor, \lfloor 2\sqrt{d} \rfloor\}$ where $d$ is the data dimension. Figure~\ref{figure:RF} shows 
the classification accuracy of RF for distributed and non-distributed data under different settings. It can be seen that, in all cases, 
the loss in classification accuracy due to distributed computing is small. 
\subsection{Further simulations}
\label{section:furtherExp}
Apart from simulation settings $D_1, D_2$, and $D_3$, we also carry out simulations under two additional settings. 
The goal is to provide an evaluation to distributed settings that are potentially of interest. The first is 
on the effect of the number of participating distributed sites, and the second is when the class 
distribution are unbalanced at individual sites. This is done only for RF as it is much faster to evaluate and we would
expect the similar for $L_1$ logistic regression.
\begin{figure}[h]
\centering
\begin{center}
\hspace{0cm}
\includegraphics[scale=0.5,clip]{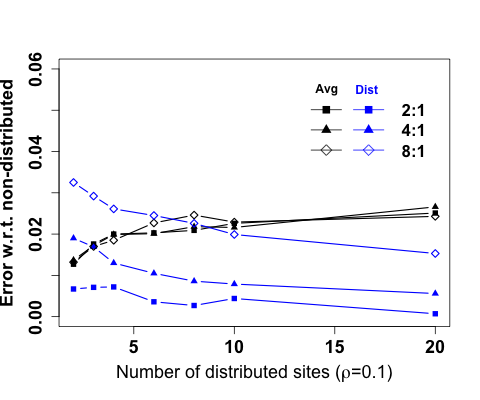}
\end{center}
\abovecaptionskip=-5pt
\caption{\it Test set error of distributed algorithm and local algorithm w.r.t. non-distributed (RF), 
at data compression ratio of 2:1, 4:1, and 8:1, respectively. } 
\label{figure:scen1N400r1}
\end{figure}
\\
\\
The first setting, labelled by $D_4$, is to explore the effect to distributed computing when the number of distributed 
sites is increasing. We compare the test set error of our proposed algorithm to the average of the test set error 
of individual sites, and the test set error when all the data are in one place. Comparison to the former would indicate 
if there is a performance gain by using our algorithm, while the later would serve as an upper bound on the performance 
of our algorithm. The number of distributed sites varies over $\{2,3,4,6,8,10,20\}$, and the data compression ratio is 
taken as 2:1, 4:1, and 8:1. Each site has data of size 1600. 
\\
\\
Figure~\ref{figure:scen1N400r1} shows the results for $\rho=0.1$. To avoid being overcrowded 
in the figure, we only show the difference in test set error w.r.t. that in non-distributed setting. The figure shows that as the number 
of participating sites increases, very soon our algorithm will outperform that using only local data (in terms of their average). 
Meanwhile, the performance gap between our algorithm and that in non-distributed setting quickly shrinks when the number of sites 
increases; for the same number of distributed sites, a lower data compression ratio leads to a smaller performance gap. 
Similar results can be seen for $\rho=0.3$ and $\rho=0.6$ in Figure~\ref{figure:scen1N400r3r6}. 
\begin{figure}[h]
\centering
\begin{center}
\hspace{0cm}
\includegraphics[scale=0.34,clip]{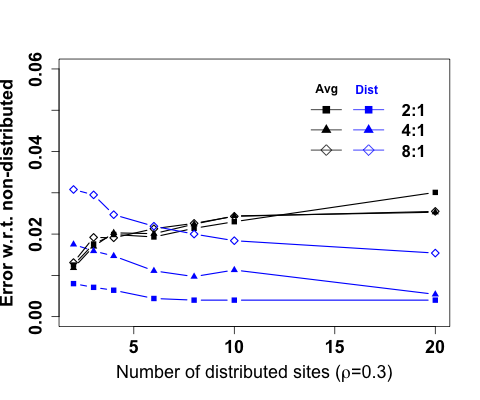}
\includegraphics[scale=0.34,clip]{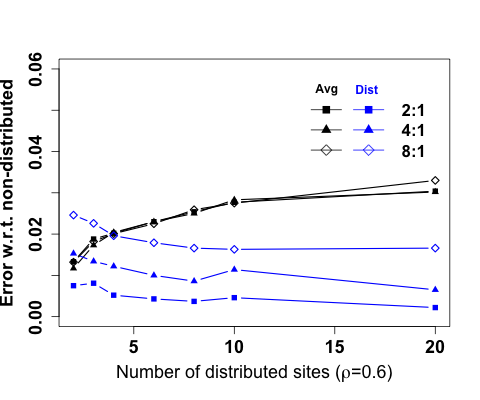}
\end{center}
\abovecaptionskip=-5pt
\caption{\it Test set error of distributed algorithm and local algorithm w.r.t. non-distributed (RF), 
at data compression ratio of 2:1, 4:1, and 8:1, respectively. } 
\label{figure:scen1N400r3r6}
\end{figure}
\\
Another simulation setting we consider is the case when there is class unbalance at individual sites; this is indicated by 
$D_5$. This occurs frequently in practice, as often any particular site has data only from certain sources thus are possibly 
to be unbalanced, while putting all the data in one place will likely reduce the extent of class unbalance. We produce training 
set that is unbalanced at individual sites as follows. First the data are split evenly at random and allocated to individual sites, 
then one class from each individual site is downsampled to create class unbalance. The selection of the downsampled class 
is carried out in a round-robin fashion, e.g., class $(i~mod~J)$ is downsampled at distributed site $i=1, 2, ..., J$.
\begin{figure}[h]
\centering
\begin{center}
\hspace{0cm}
\includegraphics[scale=0.42,clip]{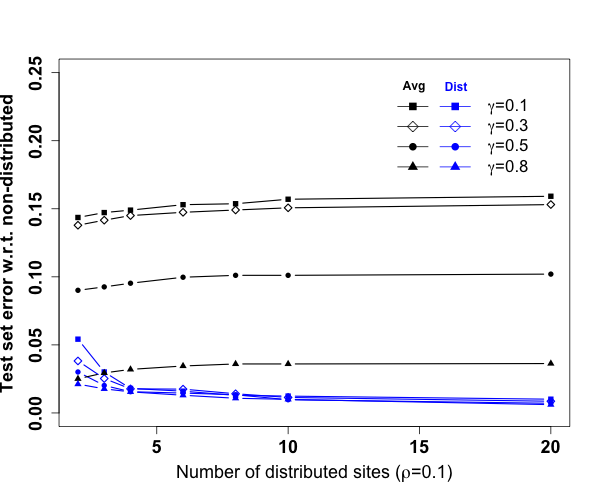}
\end{center}
\abovecaptionskip=-5pt
\caption{\it Test set error of distributed algorithm and average of local algorithms w.r.t. non-distributed algorithm (RF). 
$\gamma$ is the sampling ratio at individual distributed site to create class unbalance.  } 
\label{figure:scen2N400r1}
\end{figure}
\\
\\
Figure~\ref{figure:scen2N400r1} shows the test set error of our algorithm and the average error at individual sites when $\rho=0.1$. 
Similarly, we use the difference in test set error relative to that obtained in non-distributed setting. It can be seen that, by data sharing, 
our algorithm substantially reduces the test set error rate potentially caused by class unbalance. Similar results can be seen in 
Figure~\ref{figure:scen2N400r3r6} for $\rho=0.3$ and $\rho=0.6$. 
\begin{figure}[h]
\centering
\begin{center}
\hspace{0cm}
\includegraphics[scale=0.34,clip]{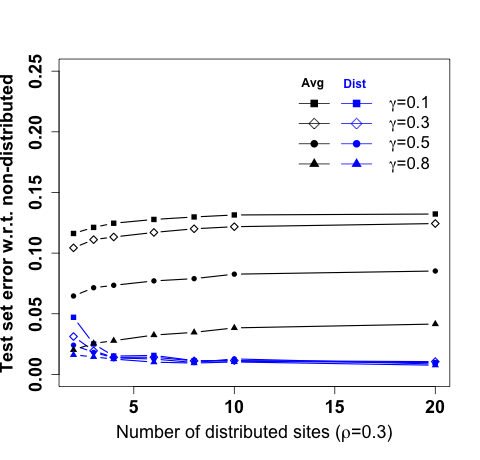}
\includegraphics[scale=0.34,clip]{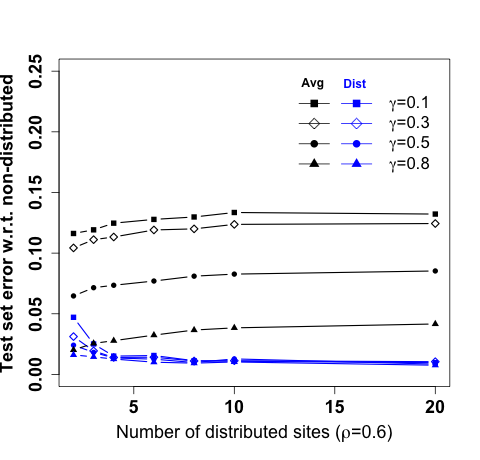}
\end{center}
\abovecaptionskip=-5pt
\caption{\it Test set error of distributed algorithm and local algorithm w.r.t. non-distributed (RF). $\gamma$ is 
the sampling ratio at individual distributed site to create class unbalance. } 
\label{figure:scen2N400r3r6}
\end{figure}
\subsection{UC Irvine data}
\label{section:expUCI}
To assess the performance on real data, we use benchmark data taken from the UC Irvine
Machine Learning Repository \cite{UCI}. Three datasets are used, including the Thyroid disease data, the Bank 
marketing data, and the Insurance benchmark data. Among the three datasets, one is from medical studies, and 
two from bank and insurance practice; such a choice reflects situations for which data are often inherently distributed 
but there is a resistance in data sharing by their owners. A summary of the datasets is given in Table~\ref{tbl:summaryUCI}.   

\begin{table}[h]
\begin{center}
\begin{tabular}{c|cc|c}
    \hline
\textbf{Data set}     & \textbf{\# Features}  &\textbf{\# instances}  & \textbf{Training}\\
    \hline
Thyroid disease   & 21       &7,200                &80\%\\
Bank marketing    &16		&45,211                &20\%\\
Insurance  &85	 &5,822		     &80\%\\
\hline
\end{tabular}
\end{center}
\caption{\it UC Irvine data sets used in our experiments. The last column shows the percentage of data used as 
the training sample (the rest for test).} \label{tbl:summaryUCI}
\end{table}
\begin{table}[h]
\begin{center}
\setlength{\extrarowheight}{2pt}
\begin{tabular}{c|c|c|c}
    \hline
\textbf{Data set}     & $\bm{D_1}$   	& $\bm{D_2}$  &$\bm{D_3}$ \\[2pt]
    \hline
Thyroid disease   & $\mathcal{C}_1+\mathcal{C}_2$    &$\mathcal{C}_1 +  \mathcal{C}_2+1/2\mathcal{C}_3$  &50\% \\[1pt]
\cline{2-4}
                             &$\mathcal{C}_3$ &1/2$\mathcal{C}_3$  & 50\%\\[2pt]
\hline
Bank marketing        &  $\mathcal{C}_1$     &70\%$\mathcal{C}_1$ +  30\%$\mathcal{C}_2$ &50\% \\[1pt]
\cline{2-4}
Insurance                 &$\mathcal{C}_2$       & 30\%$\mathcal{C}_1$ +  70\%$\mathcal{C}_2$ &50\%\\[2pt]
\hline
\end{tabular}
\end{center}
\caption{\it Simulation setting for UCI data sets. Under each of $D_1, D_2, D_3$, there are two rows corresponding to
Site 1 and Site 2, respectively.} \label{tbl:UCISetting}
\end{table}
\noindent
\\
Similar as simulations carried out for $L_1$ logistic regression and RF in Section~\ref{section:expL1LogitRF},
we also create three distributed settings, $D_{1-3}$. This is described in Table~\ref{tbl:UCISetting}. Experiments are carried 
out for $L_1$ logistic regression and RF, with K-means clustering and rpTrees as DML. Both 
the data compression ratio and the maximum size of tree leaf node are set to be 4. Table~\ref{tbl:accUCI} shows 
the test set accuracy for $L_1$ logistic regression and RF with K-means clustering and rpTrees as DML under 
distributed settings $D_{1-3}$, respectively. It can be seen that there is little or no loss in classification accuracy due 
to distributed computing in all cases.
\begin{table}[htp]
\begin{center}
\setlength{\extrarowheight}{2pt}
\resizebox{0.8\textwidth}{!}{%
\begin{tabular}{l|c|c|ccc}
    \hline
\textbf{Data set}              &\textbf{Alg}    & \textbf{NonDist}      	      &$\bm{D_1}$   &$\bm{D_2}$    &$\bm{D_3}$ \\[2pt]
    \hline
\multirow{4}{*}{\minitab[c]{Thyroid \\ disease} }   &\multirow{2}{*}{RF}       &\multirow{2}{*}{0.9928}                   &0.9860        &0.9863&0.9860\\
							\cline{4-6}
                                         &               &                                                                    &0.9841 &0.9848&0.9836\\
                                         \cline{2-6}
                                         &\multirow{2}{*}{$L_1$ logit}     &\multirow{2}{*}{0.9511}     &0.9594        &0.9580 &0.9554\\
                                         \cline{4-6}
                                         &                                             &             &0.9601 &0.9624 &0.9652\\
\hline
\multirow{4}{*}{\minitab[c]{Bank \\ marketing}}   &\multirow{2}{*}{RF}       &\multirow{2}{*}{0.8955}                   &0.8898        &0.8900  &0.8900\\
							\cline{4-6}
                                         &               &                                                                    &0.8974 &0.8938 &0.8887\\
                                         \cline{2-6}
                                         &\multirow{2}{*}{$L_1$ logit}     &\multirow{2}{*}{0.8898}      &0.8912        &0.8902 &0.8915\\
                                         \cline{4-6}
                                         &                                             &             &0.8915 &0.8914 &0.8919\\
\hline
\multirow{4}{*}{\minitab[c]{Insurance \\ benchmark}}   &\multirow{2}{*}{RF}       &\multirow{2}{*}{0.9283}           &0.9374    &0.9380  &0.9392\\
							\cline{4-6}
                                         &               &                                                                    &0.9374 &0.9391 &0.9370\\
                                         \cline{2-6}
                                         &\multirow{2}{*}{$L_1$ logit}     &\multirow{2}{*}{0.9404}      &0.9346        &0.9367 &0.9359\\
                                         \cline{4-6}
                                         &                                             &             &0.9343 &0.9341 &0.9321\\
\hline
\end{tabular}}
\end{center}
\caption{\it Test set accuracy of UC Irvine data sets. Each data with a particular method, RF or $L_1$ logit, corresponds to two rows, 
with the top row and bottom row for K-means clustering and rpTrees as DML, respectively. } \label{tbl:accUCI}
\end{table}
\section{Conclusions}
\label{section:conclusions}
We have proposed a general framework to enable learning and inference over inherently distributed data. This framework 
is based on a class of distortion minimizing local (DML) transforms. Under such a framework, one only shares a small amount 
of representative data among distributed sites, thus eliminating the need of having to transmit big data. Computation can be 
done very efficiently via {\it parallel local computation}. DMLs, implemented by K-means 
clustering or rpTrees, have desirable properties. Both can be computed efficiently, i.e., linear (or with an additional $\log$ factor) 
complexity in terms of the number of data points, and can be carried out in parallel at local nodes. Moreover, both are {\it statistically} 
efficient in the sense a vanishing error (w.r.t. the Bayes error) when the size of the local signatures increases. Thus the accuracy 
obtained under our framework will be close to that in non-distributed setting. This is demonstrated 
by simulations on both synthetic and real data under several distributed settings. One additional virtue of our framework 
is that, as the representative data need not to be in their original form, data privacy can also be preserved. 
\\
\\
We explore three popular learning and inference tools, including linear regression, $L_1$ logistic regression, and RF, and 
the additional errors incurred on metrics of interest are all small. Thus our proposed framework is promising as a general 
learning framework for distributed data. As a computational framework over inherently distributed data, our approach 
effectively solves the problem of data sharing in a distributed environment without compromising privacy. Our framework is 
expected to be applicable to a general classes of tools in learning and inference with the continuity property. Methods 
developed under our framework will allow practitioners to use potentially much larger data than previously possible, or to attack 
problems previously not feasible, due to the lack of data because of challenges in big data transmission or privacy 
concerns in data sharing. 
\section*{Acknowledgements}
We thank the editors and reviewers for helpful comments and
suggestions. 
\section*{Appendix: Proof of some theorems in Section~\ref{section:dmlTheory}}
\subsection{DML by K-means clustering}
\begin{proof}[Proof of Theorem~\ref{dmlKmeans}]
By Chebychev's inequality, we have, for all $\epsilon>0$,
\begin{eqnarray*}
P\left( || T_k(X) - X || \geq \epsilon \right) &\leq& \mathbb{E}||X-q(X)||^{\alpha}/\epsilon^{\alpha} \\
&\lesssim& C(f,\alpha,d) \cdot k^{-\alpha/d}  / \epsilon^{\alpha}\\
&\rightarrow& 0 
\end{eqnarray*}
as $k \rightarrow \infty$ where $C(f,\alpha,d)$ is a constant depending only on the source density $f$, the data 
dimension $d$, and a given $\alpha>1$.
\end{proof}
\begin{proof}[Proof of Theorem~\ref{dmlD}]
We have, for all $\epsilon>0$,
\begin{eqnarray*}
&& P\left( || T(X) - X || \geq \epsilon, X \in \cup_{j=1}^J S_j \right) \\
&\leq& \sum_{j=1}^J P\left( || T(X) - X || \geq \epsilon,  X \in S_j \right) \\
&=& \sum_{j=1}^J P\left( || T(X) - X || \geq \epsilon ~\vert  ~X \in S_j \right) \cdot P\left( X \in S_j \right)\\
&=& \sum_{j=1}^J p_j \cdot P\left( || T^{(j)}(X) - X || \geq \epsilon  \right) \\
&\rightarrow& 0 
\end{eqnarray*}
as $min(k_1,...,k_J) \rightarrow \infty$ where $k_j$ indexes transforms at site $S_j, j=1, 2, ...,J$.
In the above, the last step applies Theorem~\ref{dmlKmeans} on each individual site. 
\end{proof}
\subsection{DML by space-partitioning trees}
\begin{proof}[Proof of Theorem~\ref{dmlKD-tree}]
The key of the proof is to show that, the leaf node containing $X$ has been cut `many' times in probability thus 
has a vanishing diameter. One strategy is to show that the resulting tree is {\it full} (that is, up to certain height, 
all nodes have both left and right child nodes) up to many, say $k$, levels. 
\\
\\
Following the proof idea of Theorem 20.2 
in \cite{DevroyeGL1996}, we define {\it `good cuts'} as those cuts within $\sqrt{1+\epsilon}/2$ of the true median. 
That is, assume a node has probability mass $p$, and the probability mass of its  left and right children are $p_L$ 
and $p_R$, respectively, then
\begin{equation*}
max(p_L, p_R) \leq \frac{\sqrt{1+\epsilon}}{2}p.
\end{equation*} 
We wish to show that, up to level $k$, the probability of a bad cut vanishes as $k$ grows at a certain rate. This 
is achieved by estimating the probability of a `bad' cut. Such a probability would be small, if a node has 
many data points. Indeed that would be exponentially small, by the convergence of the empirical distribution to the 
true distribution. We first estimate the probability of a bad cut, given a node of size $N$. According to the proof 
of Theorem 20.2 in \cite{DevroyeGL1996} (the convergence of empirical distribution to the true
distribution), we have 
\begin{eqnarray}
\label{eq:badCut}
P \left ( \mbox{The cut is bad}~ \vert~ N\right) 
\leq 2 \cdot \exp\left( -\frac{1}{2}N \left( \sqrt{1+\epsilon}-1\right)^2 \right).
\end{eqnarray} 
Now we can estimate the probability of all `good' cuts up to level $k$. This is done by considering the event that 
there is at least a bad cut up to level $k-1$. Let $G_i$ be the event of a good cut at the $i^{th}$ cut, $G^c_i$ indicate 
that the $i^{th}$ cut is `bad', and $N_i$ the size of node at the $i^{th}$ cut, $i=1,2,...,2^k-1$. Note that, $N_i$ may also 
denote the event that the $i^{th}$ cut is on a node with size $N_i$. This is for simplicity of description; the exact meaning 
of $N_i$ should be clear from the context.
\begin{eqnarray}
&& P \left ( \mbox{There is at least a bad cut up to level}~k-1 \right) \nonumber\\
&=& P \left (  \cup_{i=1}^{(2^{k-1}-1)} G_i^c \right)  \leq \sum_{i=1}^{2^{k-1}-1} P \left (   G_i^c\right)  \nonumber\\
&=& \sum_{i=1}^{2^{k-1}-1} \sum_{N_i} P \left (   G_i^c  ~ \vert~ N_i \right ) \cdot P(N_i) \nonumber\\
&\leq& 2 \sum_{i=1}^{2^{k-1}-1}  \sum_{N_i} \exp\left( -\frac{1}{2}N_i \left( \sqrt{1+\epsilon}-1\right)^2 \right) \cdot P(N_i)  \label{eq:badCut1}\\
&\leq& 2 \sum_{i=1}^{2^{k-1}-1}  \exp\left( -\frac{1}{2}K_n \left( \sqrt{1+\epsilon}-1\right)^2 \right) \sum_{N_i} P(N_i) \label{eq:badCut2}\\
&\leq& 2^{k} \exp\left( -\frac{1}{2}K_n \left( \sqrt{1+\epsilon}-1\right)^2 \right) \nonumber\\
&\leq& 2^{k} \cdot 2^{ -\frac{1}{2}K_n \left( \sqrt{1+\epsilon}-1\right)^2  }.  \nonumber
\end{eqnarray} 
In the above, \eqref{eq:badCut1} uses probability of a bad cut at a node with size $N_i$ (that is, \eqref{eq:badCut}), 
and \eqref{eq:badCut2} uses the fact that a node stops growing if it has less than $K_n$ points (thus all nodes except 
the last level, i.e., the $k^{th}$ level, have more than $K_n$ points). As $K_n=n/(2^k)$, and $\frac{n}{k2^k} \rightarrow \infty$, 
the above probability vanishes as $k\rightarrow \infty$. Thus, {\it the tree is full up to level $k$} (and so far all the cuts are `good') 
{\it in probability}. 
\\
\\
We remain to show that at level $k$, the diameter of all nodes vanishes in probability. For k-d trees, the node cuts are along each 
of the $d$ coordinates in a round-robin fashion. Thus any given coordinate will be cut once for every $d$ cuts. By the time we arrive 
at the $k^{th}$ level of the tree, the total number of cuts is
\begin{equation*}
C_k=1+2+2^2+...+2^{k-1}=2^k-1.
\end{equation*}
So each coordinate has been cut $\lfloor (2^k-1)/d \rfloor$ times. We wish to estimate how much each coordinate has shrunken. 
As decision trees are invariant w.r.t. monotone transforms on its coordinates, we assume that the marginal distribution along all 
coordinates are uniform over $[0,1]$. As all the cuts up to level $k$ are `good', each cut would shrink one coordinate of the 
associated tree node by at least $1-\frac{\sqrt{1+\epsilon}}{2}$. It follows that, at the $k^{th}$ level, all the tree nodes has every 
coordinate with a range of size at most
\begin{equation*}
\left(\frac{\sqrt{1+\epsilon}}{2} \right)^{\lfloor (2^k-1)/d \rfloor}.
\end{equation*} 
Thus, as $k \rightarrow \infty$, the diameter of a cell containing $X$ vanishes in probability. 
\end{proof}


\begin{thebibliography}{10}

\bibitem{SortingWorkstation1997}
A.~C. Arpaci-Dusseau, R.~H. Arpaci-Dusseau, D.~E. Culler, J.~M. Hellerstein,
  and D.~A. Patterson.
\newblock {High-performance sorting on networks of workstations}.
\newblock In {\em ACM SIGMOD International Conference on Management of Data},
  pages 243--254, May 1997.

\bibitem{coresets}
M.~Badoiu, S.~Har-Peled, and P.~Indyk.
\newblock Approximate clustering via core-sets.
\newblock In {\em Fortieth {ACM} Symposium on Theory of Computing (STOC)},
  2002.

\bibitem{BatteyFan2015}
H.~Battey, J.~Fan, H.~Liu, J.~Lu, and Z.~Zhu.
\newblock Distributed estimation and inference with statistical guarantees.
\newblock {\em arXiv:1509.05457}, 2015.

\bibitem{Bentley1975}
J.~Bentley.
\newblock Multidimensional binary search trees used for associative searching.
\newblock {\em Communications of the ACM}, 18(9):509--517, 1975.

\bibitem{RF}
L.~Breiman.
\newblock Random {F}orests.
\newblock {\em Machine Learning}, 45(1):5--32, 2001.

\bibitem{BuhlmannVandeGeer2011}
P.~Buhlmann and S.~van~de Geer.
\newblock {\em Statistics for High-Dimensional Data: Methods, Theory and
  Applications}.
\newblock Springer, 2011.

\bibitem{caruanaKY2008}
R.~Caruana, N.~Karampatziakis, and A.~Yessenalina.
\newblock An empirical evaluation of supervised learning in high dimensions.
\newblock In {\em Proceedings of the Twenty-Fifth International Conference on
  Machine Learning (ICML)}, pages 96--103, 2008.

\bibitem{Caruana2006}
R.~Caruana and A.~Niculescu-Mizil.
\newblock An empirical comparison of supervised learning algorithms.
\newblock In {\em Proceedings of the 23rd International Conference on Machine
  Learning (ICML)}, 2006.

\bibitem{Bigtable2006}
F.~Chang, J.~Dean, S.~Ghemawat, W.~C. Hsieh, D.~A. Wallach, M.~Burrows,
  T.~Chandra, A.~Fikes, and R.~E. Gruber.
\newblock {Bigtable: A Distributed Storage System for Structured Data}.
\newblock In {\em Seventh Symposium on Operating System Design and
  Implementation (OSDI)}, November 2006.

\bibitem{ChenMaoLiu2014}
M.~Chen, S.~Mao, and Y.~Liu.
\newblock Big data: A survey.
\newblock {\em Mobile Networks and Applications}, 19:171--209, 2014.

\bibitem{Chen2011LargeSS}
X.~Chen and D.~Cai.
\newblock Large scale spectral clustering with landmark-based representation.
\newblock In {\em AAAI}, 2011.

\bibitem{ChenXie2014}
X.~Chen and M.~Xie.
\newblock A split-and-conquer approach for analysis of extraordinarily large
  data.
\newblock {\em Statistica Sinica}, 24:1655--1684, 2014.

\bibitem{ChengShang2015}
G.~Cheng and Z.~Shang.
\newblock Computational limits of divide-and-conquer method.
\newblock {\em arXiv:1512.09226}, 2015.

\bibitem{DasguptaFreund2008}
S.~Dasgupta and Y.~Freund.
\newblock Random projection trees and low dimensional manifolds.
\newblock In {\em Fortieth {ACM} Symposium on Theory of Computing (STOC)},
  2008.

\bibitem{MapReduce2004}
J.~Dean and S.~Ghemawat.
\newblock {MapReduce: Simplified Data Processing on Large Clusters }.
\newblock In {\em Sixth Symposium on Operating System Design and Implementation
  (OSDI)}, December 2004.

\bibitem{DevroyeGL1996}
L.~Devroye, L.~Gyorfi, and G.~Lugosi.
\newblock {\em {A Probabilistic Theory of Pattern Recognition}}.
\newblock Springer-Verlag, 1996.

\bibitem{Dolev2017ASO}
S.~Dolev, P.~Florissi, E.~Gudes, S.~Sharma, and I.~Singer.
\newblock A survey on geographically distributed big-data processing using
  mapreduce.
\newblock {\em IEEE Transactions on Big Data}, 2017.

\bibitem{DonohoJohnstone1994}
D.~L. Donoho and I.~M. Johnstone.
\newblock Ideal spatial adaption by wavelet shrinkage.
\newblock {\em Biometrika}, 81:425--455, 1994.

\bibitem{Efron1979}
B.~Efron.
\newblock Bootstrap methods: another look at the {J}acknife.
\newblock {\em Annals of Statistics}, 7(1):1--28, 1979.

\bibitem{FanPCA2018}
J.~Fan, D.~Wang, K.~Wang, and Z.~Zhu.
\newblock Distributed estimation of principal eigenspaces (to appear).
\newblock {\em Annals of Statistics}, 2019.

\bibitem{FaragoGyorfi1975}
T.~Farag\'o and L.~Gy\"{o}rfi.
\newblock On the continuity of the error distortion function for multiple
  hypothesis decisions.
\newblock {\em IEEE Transactions on Information Theory}, 21(4):1146--1151,
  1975.

\bibitem{FoxGribble1997}
A.~Fox, S.~D. Gribble, Y.~Chawathe, E.~A. Brewer, and P.~Gauthier.
\newblock {Cluster-based scalable network services}.
\newblock In {\em 16th ACM Symposium on Operating Systems Principles}, pages
  78--91, 1997.

\bibitem{glmnet2010}
J.~Friedman, T.~Hastie, and R.~Tibshirani.
\newblock Regulzrization paths for generalized linear models via coordinate
  descent.
\newblock {\em Journal of Statistical Software}, 33(1):1--22, 2010.

\bibitem{GoogleFileSys2003}
S.~Ghemawat, H.~Gobioff, and S.-T. Leung.
\newblock {The Google file system}.
\newblock In {\em 19th ACM Symposium on Operating Systems Principles}, pages
  29--43, 2003.

\bibitem{Quantization}
R.~M. Gray and D.~L. Neuhoff.
\newblock Quantization.
\newblock {\em IEEE Transactions of Information Theory}, 44(6):2325--2383,
  1998.

\bibitem{hartiganWong1979}
J.~A. Hartigan and M.~A. Wong.
\newblock A {K}-means clustering algorithm.
\newblock {\em Applied Statistics}, 28(1):100--108, 1979.

\bibitem{HTF2009}
T.~Hastie, R.~Tibshirani, and J.~Friedman.
\newblock {\em The Elements of Statistical Learning: Data Mining, Inference,
  and Prediction, Second Edition}.
\newblock Springer, 2009.

\bibitem{HolmesKapelner2009}
S.~Holmes, A.~Kapelner, and P.~Lee.
\newblock An interactive {J}ava statistical image segmentation system:
  Gemident.
\newblock {\em Journal of Statistical Software}, 30(10):1--20, 2009.

\bibitem{JordanLeeYang2018}
M.~I. Jordan, J.~Lee, and Y.~Yang.
\newblock Communication-efficient distributed statistical inference.
\newblock {\em Journal of the American Statistical Association},
  114(526):668--681, 2019.

\bibitem{BagLittleBootstrap}
A.~Kleiner, A.~Talwalkar, P.~Sarkar, and M.~I. Jordan.
\newblock A scalable bootstrap for massive data.
\newblock {\em Journal of the Royal Statistical Society: Series B (Statistical
  Methodology)}, 76(4):795--816, 2014.

\bibitem{LeeLiuSunTaylor2017}
J.~D. Lee, Q.~Liu, Y.~Sun, and J.~E. Taylor.
\newblock Communication-efficient sparse regression.
\newblock {\em Journal of Machine Learning Research}, 18:1--30, 2017.

\bibitem{UCI}
M.~Lichman.
\newblock {UC Irvine Machine Learning Repository}.
\newblock {http://archive.ics.uci.edu/ml}, 2013.

\bibitem{lloyd1982}
S.~P. Lloyd.
\newblock Least squares quantization in {PCM}.
\newblock {\em IEEE Transactions on Information Theory}, 28(1):128--137, 1982.

\bibitem{MaMahoneyYu2015}
P.~Ma, M.~Mahoney, and B.~Yu.
\newblock A statistical perspective on algorithmic leveraging.
\newblock {\em Journal of Machine Learning Research}, 16:861--911, 2015.

\bibitem{McCullaghNelder1989}
P.~McCullagh and J.~A. Nelder.
\newblock {\em Generalized Linear Models, Second Edition}.
\newblock Springer, 1989.

\bibitem{NJW2002}
A.~Y. Ng, M.~I. Jordan, and Y.~Weiss.
\newblock On spectral clustering: Analysis and an algorithm.
\newblock In {\em Advances in Neural Information Processing Systems (NIPS)},
  2002.

\bibitem{ParkHastie2007}
M.~Park and T.~Hastie.
\newblock L1-regularization path algorithm for generalized linear models.
\newblock {\em Journal of the Royal Statistical Society (B)}, 69(4):659--677,
  2007.

\bibitem{Pollard1981}
D.~Pollard.
\newblock Strong consistency of {K}-means clustering.
\newblock {\em The Annals of Statistics}, 9(1):135--140, 1981.

\bibitem{Rice1995}
J.~A. Rice.
\newblock {\em Mathematical Statistics and Data Analysis}.
\newblock Duxbury Press, 1995.

\bibitem{RichtarikTakac2016}
P.~Richtarik and M.~Takac.
\newblock Distributed coordinate descent method for learning with big data.
\newblock {\em Journal of Machine Learning Research}, 17:1--25, 2016.

\bibitem{RosenblattNadler2016}
J.~D. Rosenblatt and B.~Nadler.
\newblock On the optimality of averaging in distributed statistical learning.
\newblock {\em Information and Inference: A Journal of the IMA}, 5(4):379--404,
  2016.

\bibitem{ShiMalik2000}
J.~Shi and J.~Malik.
\newblock Normalized cuts and image segmentation.
\newblock {\em IEEE Transactions on Pattern Analysis and Machine Intelligence},
  22(8):888--905, 2000.

\bibitem{HDFS2010}
K.~Shvachko, H.~Kuang, S.~Radia, and R.~Chansler.
\newblock {The Hadoop Distributed File System}.
\newblock In {\em IEEE 26th Symposium on Mass Storage Systems and Technologies
  (MSST)}, pages 1--10, May 2010.

\bibitem{Tibshirani1996}
R.~Tibshirani.
\newblock Regression shrinkage and selection via the lasso.
\newblock {\em Journal of Royal Statistics Society (Series B)}, 58(1):267--288,
  1996.

\bibitem{VolgushevChaoCheng2019}
S.~Volgushev, S.-K. Chao, and G.~Cheng.
\newblock Distributed inference for quantile regression processes.
\newblock {\em The Annals of Statistics}, 47(3):1634--1662, 2019.

\bibitem{Wahba1990}
G.~Wahba.
\newblock {\em Spline Models for Observational Data}.
\newblock SIAM, Philadelphia, PA, 1990.

\bibitem{YanHuangJordan2009}
D.~Yan, L.~Huang, and M.~I. Jordan.
\newblock Fast approximate spectral clustering.
\newblock In {\em Proceedings of the 15th ACM SIGKDD}, pages 907--916, 2009.

\bibitem{deepTacoma}
D.~Yan, T.~W. Randolph, J.~Zou, and P.~Gong.
\newblock Incorporating deep features in the analysis of tissue microarray
  images.
\newblock {\em Statistics and Its Interface}, 12(2):283--293, 2019.

\bibitem{TACOMA}
D.~Yan, P.~Wang, B.~S. Knudsen, M.~Linden, and T.~W. Randolph.
\newblock Statistical methods for tissue microarray images--algorithmic scoring
  and co-training.
\newblock {\em The Annals of Applied Statistics}, 6(3):1280--1305, 2012.

\bibitem{rpForests2019}
D.~Yan, Y.~Wang, J.~Wang, H.~Wang, and Z.~Li.
\newblock K-nearest neighbor search by random projection forests.
\newblock {\em IEEE Transactions on Big Data}, PP:1--12, 2019.

\bibitem{QuantError}
P.~L. Zador.
\newblock Asymptotic quantization error of continuous signals and the
  quantization dimension.
\newblock {\em IEEE Transactions of Information Theory}, 28:139--148, 1982.

\bibitem{ZahariaChowdhury2012}
M.~Zaharia, M.~Chowdhury, T.~Das, A.~Dave, J.~Ma, M.~McCauley, M.~J. Franklin,
  S.~Shenker, and I.~Stoica.
\newblock {Resilient Distributed Datasets: A Fault-tolerant Abstraction for
  In-memory Cluster Computing}.
\newblock In {\em Proceedings of the 9th USENIX Conference on Networked Systems
  Design and Implementation (NSDI)}, 2012.

\bibitem{ZahariaChowdhury2010}
M.~Zaharia, M.~Chowdhury, M.~J. Franklin, S.~Shenker, and I.~Stoica.
\newblock {Spark: Cluster Computing with Working Sets}.
\newblock In {\em Proceedings of the 2nd USENIX Conference on Hot Topics in
  Cloud Computing}, 2010.

\bibitem{ZhangDuchiWainwright2015}
Y.~Zhang, J.~Duchi, and M.~Wainwright.
\newblock Divide and conquer kernel ridge regression: a distributed algorithm
  with minimax optimal rates.
\newblock {\em Journal of Machine Learning Research}, 16(1):3299--3340, 2015.

\end{thebibliography}

\end{document}